%% file: main_icml.tex
\documentclass{article}

\usepackage{microtype}
\usepackage{graphicx}
\usepackage{subcaption}
\usepackage{booktabs} 
\usepackage{url}
\usepackage{hyperref}

\usepackage[accepted]{icml2026}

\usepackage{amsmath}
\usepackage{amssymb}
\usepackage{mathtools}
\usepackage{amsthm}
\usepackage[dvipsnames]{xcolor}
\usepackage{natbib}
\usepackage{comment}
\usepackage{float}
\usepackage{thm-restate}
\usepackage{notations}
\usepackage{wasysym}
\usepackage{enumitem}
\usepackage{array}
\usepackage{placeins}

\usepackage[capitalize,noabbrev]{cleveref}

\theoremstyle{plain}
\newtheorem{theorem}{Theorem}[section]

\newtheorem{lemma}[theorem]{Lemma}

\theoremstyle{definition}
\newtheorem{definition}[theorem]{Definition}

\newtheorem{example}[theorem]{Example}
\theoremstyle{remark}

\crefname{example}{Example}{Examples}
\crefname{theorem}{Theorem}{Theorems}
\crefname{lemma}{Lemma}{Lemmas}
\Crefname{lemma}{Lemma}{Lemmas}
\crefname{remark}{Remark}{Remarks}
\crefname{proposition}{Proposition}{Propositions} 
\crefname{corollary}{Corollary}{Corollaries}
\crefname{definition}{Definition}{Definitions}
\crefname{assumption}{Assumption}{Assumptions}

\newcommand{\gcircle}{\textcolor{ForestGreen}{\CIRCLE}}

\newcommand\new[1]{\textcolor{black}{#1}}

\icmltitlerunning{Identifying Dependent Components from Multi-domain Linear Mixtures}

\begin{document}

\twocolumn[
  \icmltitle{Identifying Dependent Components from Multi-domain Linear Mixtures}
  \icmlsetsymbol{equal}{*}

  \begin{icmlauthorlist}
    \icmlauthor{Danru Xu}{uva}
    \icmlauthor{Lauri Parkkonen}{aalto}
    \icmlauthor{Sara Magliacane}{uva,saar}
    \icmlauthor{Aapo Hyvärinen}{hels}
  \end{icmlauthorlist}

  \icmlaffiliation{uva}{University of Amsterdam, Netherlands}
  \icmlaffiliation{aalto}{Aalto University, Finland}
  \icmlaffiliation{saar}{Saarland University, Germany}
  \icmlaffiliation{hels}{University of Helsinki, Finland}

  \icmlcorrespondingauthor{Danru Xu}{dxu3@uva.nl}
  
  \icmlkeywords{Identifiability, representation learning, multi-domain, linear mixing}

  \vskip 0.3in
]

\printAffiliationsAndNotice{}  

\begin{abstract}
We study a linear mixing model with dependent latent components, assuming multiple data domains. Most existing models assume that the components are independent or at least uncorrelated, in line with independent component analysis (ICA). Some recent work allows for dependent components, but then makes specific assumptions such as parametric forms of dependencies, multi-view settings, or interventions, or does not recover the individual components. In contrast, we consider a multi-domain setting in which domains differ through domain-specific scalings of the components, while the distribution of the underlying latent components is the same across domains. This approach can model data collected, for example, from different sensors measuring the same process, different laboratories conducting the same experiment, different experimental conditions, or different subjects that might differ in biological or physiological factors. We show that, under sufficient domain variability, latent variables and mixing functions can be identified from second-order statistics alone. We propose the \textbf{Mu}lti-\textbf{Do}main \textbf{Co}variance \textbf{M}atching (MuDo-CoM) algorithm that generalizes previous methods of joint diagonalization. MuDo-CoM is validated on simulated data and a real-world fMRI dataset.

\end{abstract}

\input{pages_new/introduction}

\input{pages_new/problem_setting}

\input{pages_new/theorem}

\input{pages_new/implementation}

\input{pages_new/related_work}

\input{pages_new/experiment}

\input{pages_new/conclusion}

\section*{Acknowledgements}

 The research of DX and SM was supported by the Air Force Office of Scientific Research under award number FA8655-22-1-7155. Any opinions, findings, and conclusions or recommendations expressed in this material are those of the author(s) and do not necessarily reflect the views of the United States Air Force. We also thank SURF for the support in using the Dutch National Supercomputer Snellius.

\section*{Impact Statement}

This paper presents work whose goal is to advance the field of Machine
Learning, and specifically identifiable
Representation Learning. There are many potential societal consequences of our work, none
which we feel must be specifically highlighted here.

\bibliography{ref}
\bibliographystyle{icml2026}

\newpage
\appendix
\onecolumn

\input{pages_new/proof}

\input{pages_new/app_licences}
\input{pages_new/app_experiments}

\end{document}

%% file: pages_new/introduction.tex
\section{Introduction}
\label{sec: introduction}

Learning meaningful latent representations from high-dimensional observations is important across many scientific fields, including neuroscience \citep{calhoun2001method, smith2013functional}, genomics \citep{leek2007capturing, stegle2012using}, and climate science \citep{hannachi2007empirical}. This assumes the observations can be modeled as entangled (mixed) measurements of underlying latent variables, also referred to as components or sources. Typically, the scientific goal of these studies is to better understand the underlying structure of the data using these latent variables. Often, the latent variables are assumed statistically independent, which has led to the rich literature on independent component analysis (ICA) \citep{hyvarinen2001independent}.  
An important question is how to create a latent variable model in which the components can be dependent, while they are still element-wise identifiable (i.e.\ they can be uniquely recovered from the measurements). 

While the basic models assume the data is observed i.i.d., in many applications, data are collected across multiple domains, such as different subjects, sensors, laboratories, or experimental conditions. 
One concrete example is in functional magnetic resonance imaging (fMRI) data, where observed signals are commonly interpreted as mixtures of latent neural sources, each ideally describing a brain region or a functional network \citep{erhardt2011comparison}. Importantly, data distributions vary across domains, e.g. due to differences in biological and physiological factors, such as age or disease state, lead to noticeable changes in the distributions. Moreover, the fMRI measurements are typically collected across different laboratories or sensors, introducing additional variability due to device-specific calibration or preprocessing \citep{costafreda2009pooling}.
\begin{figure*}[t] 
\centering \includegraphics[width=1.00\textwidth]{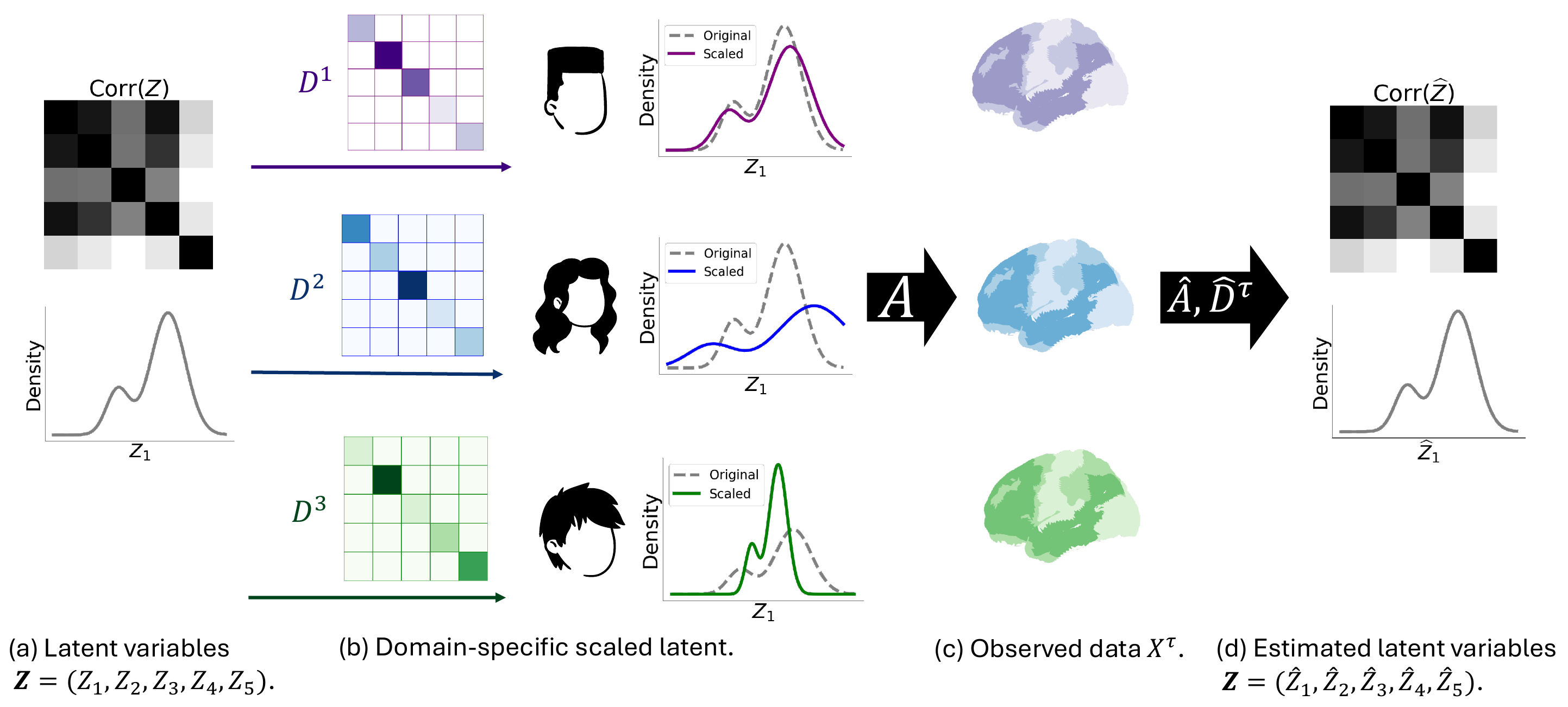} 
\caption{ Motivating example based on fMRI data. \textit{(a):} latent variables $\Zb=(Z_1,\dots,Z_5)$ with nontrivial dependencies, shown in their correlation matrix.
\textit{(b):} different domains (e.g., subjects) with their domain-specific diagonal scaling matrices $D^\tau$, which rescale the latent variables while preserving their dependency structure, as illustrated in the marginal distribution of $Z_1$. \textit{(c):} the scaled latents are mixed through a shared, domain-invariant mixing matrix $A$ to generate observed data $X^\tau$. \textit{(d):} from these observations $X^\tau$, the goal is to recover the latent variables $\Zb$ (up to permutation and scaling), capturing both inter-regional dependence and inter-domain variability. 
} 
\label{fig:fmri motivation} 
\end{figure*}

Previous research has proposed different ways to model multi-domain data using latent variables. Some of the earliest work considered a model akin to independent component analysis, and assumed the variances of the components change over domains or time segments \citep{pham2001blind}.
Recent work in causal representation learning also considers dependent components in a multi-environment setting. 
This typically requires latent variation across domains using techniques such as interventions \citep{lippe2022citris,lippe2023causal,von2023nonparametric,ahuja2023interventional,sturma2023unpaired,zhang2023identifiability,pmlr-v258-ng25a}, actions \citep{lippe2023biscuit} or specific sparsity patterns \citep{lachapelle2024nonparametric,pmlr-v235-xu24ac}.
However, in many environments, \new{we might not have access to interventional data or actions}, and the sparsity assumptions might not hold. 
A different approach is to consider the data as multi-view \citep{von2021self, richard2021shared,daunhawer2023identifiability, yao2023multi}, which means that the datapoints in different domains are paired (e.g.\ measured simultaneously or responding to the same stimulus). However, such pairing is not always feasible in practice, e.g., when we have measurements from different populations.

Here, we consider the fundamental linear mixing model, where the components undergo a linear transformation to produce the observed variables. In spite of decades of research, there is still no model that would allow for \new{element-wise identifiability of the components}, if they have \textit{arbitrary dependencies} with \textit{arbitrary linear mixing}, 
especially \new{without any temporal dependencies or hard interventions.}
This is clearly of fundamental theoretical importance. \new{ Linearity in fMRI is a widely-held assumption, starting from linear regression models in the 1990s, and in resting-state studies, linear ICA is a standard method of analysis; early justification can be found in \citet{buxton2004modeling, damoiseaux2006consistent}. On the other hand, } neuroscientific evidence indeed shows strong \emph{inter-regional correlations} during both resting state \citep{fox2007spontaneous} and during tasks \citep{ranganath2005functional, hampson2006connectivity}.
In this paper, we show how the case of arbitrary dependencies can be solved through a judicious use of the multi-domain setting. 

We model the variability over domains by assuming that while the different domains have the same underlying latent variables,  measurements are generated through \textit{domain-specific scaling transformations}.
For example, in fMRI, \new{the levels of activations of such regions are quite individual, especially in the resting state \citep{lee2023amplitudes}. }
This would mean that the magnitudes of latent neural components change over domains, e.g., different subjects, even though their underlying functional connectivity remains similar. This is illustrated in \cref{fig:fmri motivation}. We make the following contributions:

\begin{itemize}[]
    \item We introduce very general identifiability results for linearly mixed latent-variable models in multi-domain settings. We do not assume independence, specific distributions, or specific forms of dependencies.
    We only require finite second-order moments and sufficient variability in domain-specific scalings. This generalizes ICA-style identifiability to settings with \emph{arbitrary dependencies} between the latent variables with \emph{multi-domain} observations.
    \item We propose a multi-domain \textit{covariance-matching estimation algorithm} 
    that generalizes classical second-order methods (e.g., joint diagonalization and three-way tensor decompositions) by allowing correlated latent variables. 
\end{itemize}

%% file: pages_new/problem_setting.tex
\section{Problem Setting}
\label{problem setting}

We consider a setting in which the data-generating process is a latent variable model with variables in the vector $\Zb=(Z_1,...,Z_n)$, which may have \textbf{arbitrary dependencies} with each other. $\Zb$ takes values in $\Zcal \subseteq \RR^n$, and we denote its covariance matrix as $G\in\RR^{n \times n}$, which we assume is an arbitrary positive definite matrix with finite entries. Importantly, we introduce no further restriction on the distribution of $\Zb$, such as (non) Gaussianity, beyond the existence of the second-order moments.

In our setting, we do not observe $\Zb$ directly, but we instead observe a set of datasets of entangled measurements across $T$ \emph{domains}, which differ only in terms of the \textit{scaling} of the components. For example, consider the setting in which we have fMRI data for different subjects, as in Fig.~\ref{fig:fmri motivation}.  Here, latent variables $\Zb$ can be interpreted as activities of regions or functional networks, which we assume have the \emph{same dependency structure} across subjects and a \emph{shared spatial or functional mapping} from latent neural sources to observed fMRI measurements, but might differ in terms of scaling due to inter-individual variability. 
Formally, we assume a \emph{multi-domain setting} in which the observations
$\Xb^{\tau} \in \Xcal^{\tau} \subseteq \RR^d$ in each domain $\tau\in[T]$ are  generated as
\begin{equation}
\label{equ: AD}
\Xb^{\tau}=AD^{\tau}\Zb,
\end{equation}
where the model has the following parameters: $A \in \RR^{d\times n}$ is an unknown domain-invariant \new{full column-rank} mixing matrix and $D^{\tau}$ is a domain-specific diagonal scaling matrix, defined as $D^{\tau}= \text{diag}(d^{\tau}_1,...,d^{\tau}_n)$ with $ d^{\tau}_i\in  \RR\, , \forall i\in[n]$ and $\tau\in[T]$. Throughout the paper, $X^\tau$
 denotes the observed data matrix from domain $\tau$, while bold symbols (e.g., $\Xb^\tau$) denote the corresponding random variables. 

This formulation captures the idea that domains differ only through domain-specific rescaling of latent variables, while the underlying latent variables and the mixing structure remain common. In our fMRI example, the domain-invariant mixing matrix $A$ represents the shared mapping from the neural sources to the measurements, while the domain-specific diagonal scaling matrices $D^\tau$ capture each subject's individual variability in magnitude or variance in latent neural states. Another example could be data collected in different hospitals, potentially on different patient populations, where measurement devices are calibrated differently.

We emphasize that our model is different from a multi-view setting, where multiple views of the same latent variable are jointly observed or paired together, e.g., different measurements of the same subject. In our setting, the observations in the domains are not paired in any way, which allows us to model also settings with different patient populations across different domains, e.g., hospitals. Each domain provides an independent dataset generated from the same latent distribution but with a different mixing process. This distinction is crucial, as identifiability needs to be built here without relying on pair-wise correspondence across domains. 

\paragraph{Identifiability definition.}

Our goal is to recover the (potentially) dependent latent variables $\Zb$ from the observations in the different domains, as well as the mixing matrix and the domain-specific scaling matrices. 
In general, as shown in previous studies \citep{COMON1994,Cardoso1998,hyvarinen2001independent}, latent variables cannot be fully recovered from observations: they are identifiable up to an equivalence class. In the basic ICA case, only permutation and scaling indeterminacies remain. 

In our multi-domain setting, however, observations are generated through domain-specific mixing functions that differ through scaling matrices $D^\tau$. We therefore need to adapt the classic definition of \emph{identifiability up to permutation and scaling} used in single domains  \citep{COMON1994,khemakhem2020variational,pmlr-v202-lachapelle23a,pmlr-v235-xu24ac}, to fit our multi-domain setting with domain-specific scalings. In fact, two different cases will be used in our theory:

\begin{definition}
\label{def: idents}
Given an observation $\Xb^{\tau} = \fb^{\tau}(\Zb)$ and a learned representation vector $\gb^{\tau}(\Xb^{\tau})$, the ground-truth latent variables $\Zb$ are identified by $\gb^{\tau}(\Xb)$ up to: 
\begin{enumerate}[label=(\roman*)]
    \item  
\emph{permutation, scaling and domain-specific sign},
if there exist a permutation matrix $P$, an invertible diagonal matrix $L$,
and for each domain $\tau \in [T]$ a sign matrix $S^{\tau}$ (a diagonal matrix with entries in $\{-1,1\}$),
such that $\gb^{\tau}(\Xb^{\tau})= PLS^{\tau} \Zb$ for all $\tau \in [T]$.
\item \emph{permutation and scaling}, 
if additionally there is no sign discrepancy across domains, i.e., $\gb^{\tau}(\Xb^{\tau})= PL\Zb$.
\end{enumerate}
\end{definition}

%% file: pages_new/theorem.tex
\section{Identifiability of Latent Variables}
\label{sec: identi thms}
In this section, we study the identifiability of latent variables in the multi-domain setting with domain-specific rescaling described in Sec.~\ref{problem setting}. 
Importantly, multiple domains give additional information only when observations are collected under sufficiently diverse measurement conditions.  
If the domain-specific scaling matrices are too similar or degenerate (e.g., some entries are zero), this will result in the same information as a single domain.

To rule out such ill-posed cases, we require sufficient diversity in how the latent variables are rescaled across domains. We formalize this as follows:

\begin{restatable}{assumption}{asssuffscaling}
\label{ass: sufficient variability}
[Sufficient variability of domain scalings]
There exist  $ m=\binom{n+3}{4}$
domains $\tau_1,\dots,\tau_m$, each with a domain-specific diagonal scaling matrix
$D^\tau = \mathrm{diag}(d_1^\tau,\dots,d_n^\tau)$, such that the matrix
\begin{equation}
\begin{pmatrix}
\mathbf{d}^{\tau_1}_{(4)} & \mathbf{d}^{\tau_1}_{(3,1)} & \mathbf{d}^{\tau_1}_{(2,2)} &
\mathbf{d}^{\tau_1}_{(2,1,1)} & \mathbf{d}^{\tau_1}_{(1,1,1,1)} \\
\mathbf{d}^{\tau_2}_{(4)} & \mathbf{d}^{\tau_2}_{(3,1)} & \mathbf{d}^{\tau_2}_{(2,2)} &
\mathbf{d}^{\tau_2}_{(2,1,1)} & \mathbf{d}^{\tau_2}_{(1,1,1,1)} \\
\vdots & \vdots & \vdots & \vdots & \vdots \\
\mathbf{d}^{\tau_m}_{(4)} & \mathbf{d}^{\tau_m}_{(3,1)} & \mathbf{d}^{\tau_m}_{(2,2)} &
\mathbf{d}^{\tau_m}_{(2,1,1)} & \mathbf{d}^{\tau_m}_{(1,1,1,1)}
\end{pmatrix}
\end{equation}

is full-row rank, composed of the following sub-matrices:
\begin{align*}
&\mathbf{d}^{\tau}_{(4)} = \{ (d_i^\tau)^4 \}_{i=1}^n,\\
&\mathbf{d}^{\tau}_{(3,1)} = \{ (d_i^\tau)^3 d_j^\tau \}_{i, j \in [n] \ \mathrm{ s.t. } \ i \neq j },\\
&\mathbf{d}^{\tau}_{(2,2)} = \{ (d_i^\tau)^2 (d_j^\tau)^2 \}_{i,j \in [n], \ \mathrm{ s.t. } \  i<j},\\
&\mathbf{d}^{\tau}_{(2,1,1)} = \{ (d_a^\tau)^2 d_b^\tau d_c^\tau \}_{a,b,c\ \in [n], \ \mathrm{ s.t. } \  \text{all distinct},\, b<c},\\
&\mathbf{d}^{\tau}_{(1,1,1,1)} = \{ d_a^\tau d_b^\tau d_c^\tau d_d^\tau \}_{a,b,c, d\ \in [n],\ \mathrm{ s.t. } \ a<b<c<d}.
\end{align*}

In addition, $d_i^{\tau_j} \neq 0$ for all $i \in [n]$ and $j \in [m]$.
\end{restatable}

The requirement of $ m=\binom{n+3}{4}$ arises from a worst-case identifiability analysis and should be viewed as a sufficient but not necessary condition. In practice, since real-world scaling factors typically vary continuously and are far from adversarial, substantially fewer domains often suffice. We empirically verify this in Sec.~\ref{sec: experiments} and provide a simple example in Ex.~\ref{exam: assum}. Joint Diagonalization \citep{belouchrani2002blind} also uses similar sufficient variability of domains to achieve identifiability, and it can be regarded as a special case of ours; a detailed discussion is in Sec.~\ref{sec: mudo-com} and App.~\ref{app: intrinsic}.

Now we can give our main identifiability theorems.
Identifying the latent variables $\Zb$ is fundamentally related to identifying the underlying mixing process, which in our setting is parameterized by $A$ and $\{ D^\tau\}_{\tau=1}^T$. However, in the multi-domain case, these are not quite equivalent, as we will see below. As a first step, we establish identifiability results for the domain-invariant mixing matrix $A$:  

\begin{restatable}{theorem}{thmidenti}
\label{thm: identi}
Consider the setting described in Sec.~\ref{problem setting}. Suppose Ass.~\ref{ass: sufficient variability} is satisfied. Then, the domain-invariant mixing matrix $A$ is identifiable up to permutation and scaling (including sign switches), i.e., $A$ is recovered up to right multiplication by a monomial matrix \footnote{We recall that a monomial matrix is a square matrix where each row and each column contains exactly one nonzero entry.}.  
\end{restatable}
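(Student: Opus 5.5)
The plan is to work with second-order statistics only, since these are all that is used. I would eliminate the unknown domain scalings by taking degree-four combinations of the covariance entries, which is why Ass.~\ref{ass: sufficient variability} involves quartic monomials in $d^\tau$. Then I would evaluate the resulting polynomial identity at degenerate scalings.

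\emph{Step 1 (reduction to a square problem).} In domain $\tau$ the observed covariance is $\Sigma^\tau = A D^\tau G D^\tau A^\top$. Suppose a second parameter set $(B, \{E^\tau\}, H)$, with $B$ full column rank, $E^\tau$ diagonal and invertible, and $H$ positive definite, generates the same covariances: $\Sigma^\tau = B E^\tau H E^\tau B^\top$ for all $\tau$. Because $D^{\tau}$ is invertible and $G \succ 0$, the column space of $\Sigma^\tau$ equals that of $A$ and also that of $B$. Hence $B = AM'$ for some invertible $M' \in \RR^{n\times n}$. Multiplying on the left by $A^{+}$ and on the right by $(A^{+})^\top$ gives, for every domain,
\[
D^\tau G D^\tau = M' E^\tau H E^\tau M'^\top .
\]
Writing $M = M'^{-1}$ and $C^\tau := M D^\tau G D^\tau M^\top$, this becomes $C^\tau = E^\tau H E^\tau$. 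It then suffices to show that $M$ is monomial.

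\emph{Step 2 (eliminating the unknown scalings).} The entries of $E^\tau H E^\tau$ are $e_k e_l H_{kl}$. The squared correlation $(C^\tau_{kl})^2/(C^\tau_{kk}C^\tau_{ll})$ therefore equals $H_{kl}^2/(H_{kk}H_{ll})$, which does not depend on $\tau$. Clearing denominators gives, for every pair $k,l$ and every domain $\tau$,
\[
H_{kk}H_{ll}\,(C^\tau_{kl})^2 - H_{kl}^2\, C^\tau_{kk} C^\tau_{ll} = 0 .
\]
Each $C^\tau_{kl}$ is a homogeneous quadratic form in $(d^\tau_1,\dots,d^\tau_n)$. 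So the left-hand side is a homogeneous quartic polynomial $p_{kl}(d^\tau)$. Its coefficients, which depend on $M$, $G$ and $H$ but not on $\tau$, multiply exactly the monomials listed in $\mathbf d_{(4)},\mathbf d_{(3,1)},\mathbf d_{(2,2)},\mathbf d_{(2,1,1)},\mathbf d_{(1,1,1,1)}$. There are $\binom{n+3}{4}$ of these monomials.

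Evaluating at the $m$ domains of Ass.~\ref{ass: sufficient variability} gives a homogeneous linear system for the coefficient vector, whose coefficient matrix is the square matrix of the assumption. That matrix is full rank, so the coefficient vector is zero. Hence $p_{kl} \equiv 0$ as a polynomial, that is, the identity holds for every $d \in \RR^n$, including degenerate ones.

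\emph{Step 3 (degenerate evaluation).} Plug in $d = \mathbf e_i$, the $i$-th standard basis vector. Then $C = G_{ii}\, m_i m_i^\top$, where $m_i$ is the $i$-th column of $M$, and the identity reduces to
\[
G_{ii}^2\, m_{ki}^2 m_{li}^2\,\bigl(H_{kk}H_{ll} - H_{kl}^2\bigr) = 0 .
\]
For $k\neq l$ the bracket is strictly positive because $H \succ 0$, and $G_{ii}>0$. Hence $m_{ki}m_{li}=0$, so every column of $M$ has at most one nonzero entry. Since $M$ is invertible, $M$ is monomial. Then $B = A M^{-1}$, and the inverse of a monomial matrix is monomial, which gives the claim.

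The main obstacle I expect is Step 2. One must check carefully that the coefficient bookkeeping of the quartic in $d$ matches the monomial ordering of the assumption, so that full rank really forces every coefficient to vanish. This includes the symmetric cross terms, for example making sure that the terms $d_a^2 d_b d_c$ with $b<c$ are collected correctly. One must also handle the clearing of denominators, which relies on $C^\tau_{kk}>0$; this holds because $d^\tau_i \neq 0$ and $G \succ 0$. Once the polynomial identity is established, Step 3 is short.
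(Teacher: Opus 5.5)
Your proposal is correct and follows essentially the paper's proof: the domain-invariant squared correlations $C_{pq}=\hat G_{pq}^2/(\hat G_{pp}\hat G_{qq})$ give a quartic identity in $d^\tau$ whose coefficients all vanish by Ass.~\ref{ass: sufficient variability}, and your evaluation at $d=\mathbf e_i$ is exactly the paper's reading-off of the $(d_a)^4$ coefficient $\beta^{(4)}_a$, with $H_{kk}H_{ll}-H_{kl}^2>0$ playing the role of $C_{pq}<1$. Your explicit column-space argument in Step 1 (giving $B=AM'$) is a useful addition, since the paper works with $(\hat A^T\hat A)^{-1}\hat A^T A$ and leaves implicit that $A$ and $\hat A$ have the same column space.
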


The proof is given in App.~\ref{app: proof1}.
Regarding identifiability of the components, the situation is complicated by the fact that the domains have different scalings and signs. The identifiability of the components is analyzed in the following theorem (proven in App.~\ref{app: proof2}):

\begin{restatable}{theorem}{thmidentiD}
\label{thm: same sign D}
We assume the same conditions
as Thm.~\ref{thm: identi}. Then, the domain-specific mixing $\{AD^\tau\}_{\tau=1}^T$ can be identified up to right multiplication by a (domain-independent) monomial matrix and a domain-related sign matrix $S^\tau$ for all $\tau \in [T]$, leading to the identifiability of $\Zb$ up to permutation, scaling, and domain-specific sign (Def.~\ref{def: idents}.(i)). 

Moreover, if all entries of $D^\tau$ share the same sign for every $\tau \in [T]$, then $\Zb$ is identifiable up to permutation and scaling for all domains
$\tau \in [T]$ (Def.~\ref{def: idents}.(ii)).

\end{restatable}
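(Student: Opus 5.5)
Since Thm.~\ref{thm: identi} already gives $\hat A = A P L$ for a permutation matrix $P$ and an invertible diagonal $L$, the plan is to take it as the starting point and pin down the domain-specific scalings from second-order statistics only. Suppose an alternative parameter set $(\hat A,\{\hat D^\tau\},\hat G)$ reproduces all observed covariances, i.e.\ $\Sigma^\tau = A D^\tau G D^\tau A^\top = \hat A \hat D^\tau \hat G \hat D^\tau \hat A^\top$ for every $\tau\in[T]$. Because $A$ has full column rank, it has a left inverse $A^{+}$. Substituting $\hat A = APL$ and applying $A^{+}$ on the left and $(A^{+})^\top$ on the right should give
\[
D^\tau G D^\tau = M \hat D^\tau \hat G \hat D^\tau M^\top, \qquad M := PL
\]
for each $\tau$. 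Commuting the monomial $M$ past the diagonal matrices then gives $D^\tau G D^\tau = \tilde D^\tau \tilde G \tilde D^\tau$, with $\tilde D^\tau := M\hat D^\tau M^{-1}$ diagonal and $\tilde G := M\hat G M^\top$ positive definite.

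The core step is to compare these entrywise: $d_i^\tau d_j^\tau G_{ij} = \tilde d_i^\tau \tilde d_j^\tau \tilde G_{ij}$ for all $i,j$. The diagonal entries give $(d_i^\tau)^2 G_{ii} = (\tilde d_i^\tau)^2 \tilde G_{ii}$. Since $G_{ii},\tilde G_{ii}>0$ and $d_i^\tau\neq 0$, this yields $\tilde d_i^\tau = s_i^\tau c_i d_i^\tau$ with $c_i := \sqrt{G_{ii}/\tilde G_{ii}}>0$ and a sign $s_i^\tau\in\{\pm1\}$.

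The main obstacle is that $c_i$ must not depend on $\tau$, because the latent distribution, and hence $\tilde G$, is shared across domains. In this formulation that is automatic: $\hat G$ is a single parameter, so $c_i$ is domain-independent. What remains free is only the per-domain sign, which the off-diagonal relations fix up to a constraint. When $G_{ij}\neq 0$ they force $s_i^\tau s_j^\tau$ to be constant across $\tau$, but they cannot remove a global per-domain sign flip. In general I would keep $S^\tau=\mathrm{diag}(s_i^\tau)$ as an unresolved domain-specific sign matrix.

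This gives $\hat A \hat D^\tau = A P L M^{-1}\tilde D^\tau M = A D^\tau C S^\tau M$ with $C=\mathrm{diag}(c_i)$. Since diagonal matrices commute, this is $A D^\tau$ times a domain-independent monomial matrix and a domain-related sign matrix. Applying the inverse (pseudo-inverse) of the learned mixing to $\Xb^\tau$ then recovers $\Zb$ as $\hat\Zb = P' L' S^\tau \Zb$, which is Def.~\ref{def: idents}(i).

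For the second claim, assume all entries of each $D^\tau$ share a sign, and I would impose the same restriction on the learned $\hat D^\tau$. Each domain's sign ambiguity is then either $+I$ or $-I$. Since $\pm I$ commutes with everything, it can be absorbed: the fixed sign $\epsilon^\tau\in\{\pm1\}$ of $D^\tau$ lets us write $D^\tau = \epsilon^\tau |D^\tau|$, so a global per-domain sign flip of the mixing is a flip of the representation that is uniform across components. I would then argue that $S^\tau$ is a scalar multiple of the identity, and that a consistent sign convention (e.g.\ $\hat D^\tau$ with positive entries, which the same-sign assumption makes possible) makes $S^\tau$ the same for every domain. It can then be merged into $L$, giving Def.~\ref{def: idents}(ii). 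The delicate point here is ruling out mixed sign patterns $s_i^\tau$ when $G$ has zero off-diagonal blocks. I would handle it by noting that the same-sign constraint on both the true and the estimated scalings forces $s_i^\tau$ to be equal across $i$ within each domain.
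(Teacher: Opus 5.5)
Your part (i) is correct and follows the paper's route. The paper writes $(\hat D^\tau)^{-1}(\hat A^\top\hat A)^{-1}\hat A^\top A D^\tau = PE^\tau$ and reads off the diagonal of $\hat G = PE^\tau G(PE^\tau)^\top$. You instead pull back to $D^\tau G D^\tau = \tilde D^\tau \tilde G \tilde D^\tau$ and read off the same diagonal identity. In both versions the magnitudes are domain-independent because the latent covariance is shared, and only per-entry signs survive.

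The gap is in part (ii). You read the hypothesis as saying that each $D^\tau$ has a common sign $\epsilon^\tau$, which may vary with $\tau$. Your own computation then gives $s_i^\tau=\epsilon^\tau\hat\epsilon^\tau$, i.e.\ $S^\tau=\epsilon^\tau\hat\epsilon^\tau I$. The step ``a consistent sign convention (e.g.\ $\hat D^\tau$ positive) makes $S^\tau$ the same for every domain'' then fails. Fixing $\hat\epsilon^\tau=1$ leaves $S^\tau=\epsilon^\tau I$, which depends on the unknown true signs. Commuting $\pm I$ past other factors does not remove it, because it multiplies $\Zb$ differently in different domains.

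Under your reading the conclusion is actually false. Take $D^\tau=\epsilon^\tau|D^\tau|$ with $\epsilon^1=1$, $\epsilon^2=-1$, where the $|D^\tau|$ satisfy Assumption~\ref{ass: sufficient variability}. The $D^\tau$ satisfy it too, since the degree-4 monomial rows are unchanged by $d^\tau\mapsto -d^\tau$. Then $\hat A=A$, $\hat G=G$, $\hat D^\tau=|D^\tau|$ does the following:
\begin{itemize}
\item it reproduces every $\mathrm{Cov}(\Xb^\tau)$, and every distribution if $\Zb$ is centered Gaussian;
\item it respects the same-sign constraint;
\item it yields $(\hat D^\tau)^{-1}(\hat A^\top\hat A)^{-1}\hat A^\top\Xb^\tau=\epsilon^\tau\Zb$.
\end{itemize}
This last representation is $\Zb$ in one domain and $-\Zb$ in the other, so it is not of the form $PL\Zb$ with $PL$ common to all domains.

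What the claim needs is one sign $\epsilon$ for all entries of all $D^\tau$, and one sign $\hat\epsilon$ for all entries of all $\hat D^\tau$. This is what the paper's proof uses; compare the wording of Thm.~\ref{thm: covmatch}: ``share the same sign across domains and $\hat D^\tau$ is constrained accordingly''. With that hypothesis, your identity $s_i^\tau=\mathrm{sign}(\tilde d_i^\tau)\,\mathrm{sign}(d_i^\tau)$ applies, where $\tilde d^\tau$ is a permutation of $\hat d^\tau$. It gives $s_i^\tau=\epsilon\hat\epsilon$ for all $i$ and $\tau$, so $S^\tau$ is domain-independent and merges into the monomial factor. This is exactly the paper's observation that $E_i^\tau$ has a $\tau$-independent sign.
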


This theorem says that variability across domains allows us to disentangle the latent variables from the observations and estimate how they are rescaled across domains. However, 
the signs of the components cannot be recovered in general, i.e., the signs may still flip from one domain to another independently without affecting the observed data. This leads to identifiability of the latent variables up to permutation, scaling, and domain-specific sign (Def.~\ref{def: idents}.(i)). However, if we further assume that the scalings in each domain share a common sign, then these sign indeterminacies are removed, enabling identifiability up to the classical permutation and scaling (Def.~\ref{def: idents}.(ii)).

%% file: pages_new/implementation.tex
\section{ Multi-Domain Covariance Matching}
\label{sec: mudo-com}

Based on the multi-domain latent variable setting and the identifiability results in Sec.~\ref{sec: identi thms}, we introduce the \textbf{Mu}lti-\textbf{Do}main \textbf{Co}variance \textbf{M}atching (MuDo-CoM)
algorithm for recovering latent component from multi-domain observations, composed of the two following steps.

\paragraph{Step 1: Covariance–matching estimation.}
For a multi-domain dataset $\{X^\tau\}_{\tau=1}^T$, 
we estimate
$\hat{A}$, $\{\hat{D}^\tau\}_{\tau=1}^T$, $\hat{G}$
by minimizing the domain-wise covariance-matching loss as follows:
\begin{align} 
\label{loss}
\min_{\hat{A},\{\hat{D}^\tau\},\hat{G}}
\sum_{\tau=1}^T
\left\|
\widehat{\mathrm{Cov}}(X^\tau)
-
\hat{A}\hat{D}^\tau \hat{G} \hat{D}^\tau \hat{A}^T
\right\|_F^2,
\end{align}
where $\|\cdot\|_F^2$ denotes the squared Frobenius norm, i.e., the sum of squared entries. $\widehat{\mathrm{Cov}}(X^\tau)$ denotes the sample covariance matrix computed from data $X^\tau$ in domain $\tau$.

In the implementation, we parameterize $\hat{G}$ using a structural equation model (SEM)-inspired factorization, analogous to a Cholesky decomposition:
$\hat{G}
=
(I-\hat{B})^{-1}
\hat{\Lambda}^2
(I-\hat{B})^{-T}$,
where $\hat{B}$ is constrained to have zero diagonal and $\hat{\Lambda}^2$ is diagonal with positive entries.
This is well known to exactly parametrize the set of covariance matrices; other parameterizations could of course be used.

The latent dimension can be determined via calculating the rank of observations, i.e., the number of non-zero singular values of sample covariance $\widehat{\mathrm{Cov}}(X^\tau)$; or in practice, the number of clearly non-zero singular values according to a given threshold. We optimize the objective with Adam \citep{diederik2014adam}. \new{This loss function is non-convex, as shown in an example in App.~\ref{app: nonconvex}.}

\paragraph{Step 2: Latent recovery using the identification mapping.}
Given the estimates above, we recover latent variables via the following formula, 
\begin{align}
\hat{Z}^\tau
=
(\hat{D}^{\tau})^{-1}
(\hat{A}^T \hat{A})^{-1}
\hat{A}^T X^\tau,
\qquad \tau=1,\dots,T.
\end{align}
This corresponds to projecting $X^\tau$ onto the estimated latent space followed by domain-wise rescaling. \new{We analyse the stability of source extraction, specifically the inversion of $\hat{A}^T \hat{A}$, in App.~\ref{app: stab AAT}}.
We analyze the convergence of our method and show that minimizing the covariance-matching objective leads to consistent recovery of the latent variables (proven in App.~\ref{app: proof3}):
\begin{restatable}{theorem}{thmcovmatch}
\label{thm: covmatch}
We assume the same conditions
as Thm.~\ref{thm: identi}. Let $(\hat A,\{\hat D^\tau\}_{\tau=1}^T,\hat G)$ be any global minimizer of the covariance-matching objective Eq.~\ref{loss}.
We assume that for each domain $\tau\in[T]$, sample covariance $\widehat{\mathrm{Cov}}(X^\tau)$ in Eq.~\ref{loss}  consistently estimates $\mathrm{Cov}(\Xb^\tau)$.
Then, the latent variables $\Zb$ is consistently estimated by $(\hat{D}^{\tau})^{-1}(\hat{A}^T\hat{A})^{-1}\hat{A}^T\Xb^{\tau}$ up to the following indeterminacies:
\begin{itemize}[]
    \item Permutation, scaling, and domain-specific sign (Def.~\ref{def: idents}.(i)) in the general case;
    \item Permutation and scaling only (Def.~\ref{def: idents}.(ii)) if all entries of $D^{\tau}$ share the same sign across domains and $\hat{D}^{\tau}$ is constrained accordingly.
\end{itemize}

\end{restatable}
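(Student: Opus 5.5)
The proof reduces to the population-level identifiability results, Thm.~\ref{thm: identi} and Thm.~\ref{thm: same sign D}, combined with a consistency argument for the sample covariances. First consider the population objective, obtained by replacing $\widehat{\mathrm{Cov}}(X^\tau)$ by $\mathrm{Cov}(\Xb^\tau)=AD^\tau G D^\tau A^T$ in Eq.~\ref{loss}. This objective is nonnegative and attains zero at the true parameters $(A,\{D^\tau\},G)$. Any global minimizer $(\hat A,\{\hat D^\tau\},\hat G)$ of it therefore satisfies exact covariance matching,
\begin{align*}
\hat A\hat D^\tau\hat G\hat D^\tau\hat A^T = AD^\tau GD^\tau A^T \quad \text{for all } \tau\in[T].
\end{align*}
The proofs of Thm.~\ref{thm: identi} and Thm.~\ref{thm: same sign D} work only from these second-order statistics, and their conclusions apply to any alternative model that reproduces all domain covariances. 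Applying them here gives $\hat A = A M$ for a monomial matrix $M=PL$, and $\hat A\hat D^\tau = AD^\tau M S^\tau$ for domain-specific sign matrices $S^\tau$. To make this step rigorous, I would check that a minimizer is itself a valid model instance. The hypotheses of those theorems (full column rank of $A$, the variability of Ass.~\ref{ass: sufficient variability}, and positive definite $G$) are stated for the ground truth only, so I expect the theorems to need only the true model to satisfy them. Nondegeneracy of $\hat A$ and $\hat D^\tau$ should then be inherited from equality of ranks of the covariances.

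Next, compute the recovery map. Since $\hat A = AM$ has full column rank, $(\hat A^T\hat A)^{-1}\hat A^T A = M^{-1}$. Also $\hat D^\tau = M^{-1}D^\tau M S^\tau$, and $M^{-1}D^\tau M$ is diagonal. Hence
\begin{align*}
(\hat D^\tau)^{-1}(\hat A^T\hat A)^{-1}\hat A^T \Xb^\tau &= (\hat D^\tau)^{-1}M^{-1}D^\tau\Zb \\
&= S^\tau M^{-1}(D^\tau)^{-1}M\,M^{-1}D^\tau \Zb \\
&= S^\tau M^{-1}\Zb .
\end{align*}
This has the form $P'L'S'^{\tau}\Zb$ required by Def.~\ref{def: idents}.(i), because a sign matrix conjugated by a permutation is again a sign matrix. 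For the second bullet, when each $D^\tau$ has entries of a common sign and $\hat D^\tau$ is constrained the same way, Thm.~\ref{thm: same sign D} removes the domain dependence of $S^\tau$: the sign can be absorbed into $L$. This gives Def.~\ref{def: idents}.(ii).

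The remaining step, which I expect to be the main obstacle, is passing from the population objective to the sample objective. The plan is a standard argmin-consistency argument. The sample loss converges uniformly to the population loss on the relevant sets, because the loss is a polynomial in the parameters and the sample covariances converge. The zero set of the population loss is exactly the orbit of the truth under the indeterminacy group. Together these should imply that the distance from the sample minimizers to this orbit tends to zero.

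The parameter space is noncompact because of the scaling indeterminacies, so I would first quotient them out by normalizing, for example by fixing the columns of $\hat A$ to unit norm, which is harmless since those scalings do not change $\hat A \hat D^\tau \hat G \hat D^\tau \hat A^T$. I would then argue that minimizers stay in a compact set with $\hat D^\tau$ bounded away from singular. The continuous-mapping theorem, applied to the recovery map near the orbit, finally gives consistency of $\hat Z^\tau$ up to the stated indeterminacies.
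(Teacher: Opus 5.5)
Your population-level reduction is essentially the paper's. Its proof uses $\widehat{\mathcal L}_N(\hat A_N,\{\hat D^\tau_N\},\hat G_N)\le\widehat{\mathcal L}_N(A,\{D^\tau\},G)=\sum_\tau\|\widehat{\mathrm{Cov}}(X^\tau)-\Sigma^\tau\|_F^2\to 0$ and the triangle inequality to get $\hat\Sigma^\tau_N:=\hat A_N\hat D^\tau_N\hat G_N\hat D^\tau_N\hat A_N^T\to\Sigma^\tau$ in probability. It then treats the limit as exact covariance matching and invokes Thm.~\ref{thm: same sign D}. Use that inequality instead of uniform convergence: it needs no control of the parameter space and leaves a purely deterministic question.

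There is one slip in your algebra. $\hat A=AM$ and $\hat A\hat D^\tau=AD^\tau MS^\tau$ cannot hold with the same $M$. That would force $\hat D^\tau=M^{-1}D^\tau MS^\tau$, which excludes the exact fit $(A,\{D^\tau L\},L^{-1}GL^{-1})$ for any positive diagonal $L\neq I$. In the paper, the second monomial matrix is $PL$, with $L$ built from $\mathrm{diag}(\hat G)$. With two matrices $M_1,M_2$ your computation yields $S^\tau M_2^{-1}\Zb$, so the conclusion survives.

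The genuine gap is the compactness step. The paper also skips it by ``passing to the limit'', and your plan does not close it.

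First, unit-norm columns of $\hat A$ do not remove all scaling. The map $(\hat D^\tau,\hat G)\mapsto(\hat D^\tau L,L^{-1}\hat GL^{-1})$, with $L$ common to all domains, leaves every fitted covariance unchanged. So the zero set is still unbounded and contains points with $\hat D^\tau$ arbitrarily close to singular. You also need, e.g., $\mathrm{diag}(\hat G)=I$.

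Second, even then, boundedness of minimizers does not follow from Thm.~\ref{thm: identi} or Thm.~\ref{thm: same sign D}, which are exact statements. You can reduce it. Write $\hat A_N^+=(\hat A_N^T\hat A_N)^{-1}\hat A_N^T$ and suppose the normalized $\hat A_N$ stays away from rank deficiency. Along subsequences, $\hat D^\tau_N\hat G_N\hat D^\tau_N=\hat A_N^+\hat\Sigma^\tau_N(\hat A_N^+)^T$ then tends to a positive definite matrix. The unit diagonal of $\hat G_N$ then bounds $\hat D^\tau_N$ above and below and keeps $\hat G_N$ nonsingular.

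What remains is to exclude a PARAFAC-type degeneracy. In it, columns of $\hat A_N$ collapse while $\hat D^\tau_N$ diverges and $\hat G_N$ tends to a singular correlation matrix. Lemma~\ref{lemma: X monomial} does not handle this. Pass to the limit in its $\tau$-independent ratios $C_{pq}=\hat G_{pq}^2/(\hat G_{pp}\hat G_{qq})$, rewritten through the normalized rows of $\hat A_N^+$. You get the same degree-4 identity, but now $C_{pq}=1$ is allowed. By Cauchy--Schwarz, $C_{pq}=1$ holds exactly when the limiting rows $p$ and $q$ coincide up to sign. So the coefficient $G_{aa}^2x_{ap}^2x_{aq}^2(1-C_{pq})$ no longer forces a monomial limit. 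You need a separate, finer argument that uses Ass.~\ref{ass: sufficient variability} to rule this case out; without it, your argmin-consistency and continuous-mapping steps do not go through.
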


\paragraph{Connection with joint diagonalization.} Joint diagonalization \citep{belouchrani2002blind, pham2001blind, ablin2018beyond} plays a major role in signal processing, blind source separation, and neuroscience, and has been widely used in the context of time series and spatial data. It
can be viewed as a special form of MuDo-CoM in which the covariance matrices $\widehat{\mathrm{Cov}}(X^\tau)$ and $\hat{G}$ in \eqref{loss} are constrained to be diagonal, based on the assumption that the underlying $G$ is diagonal. 
In the classical joint diagonalization method, multiple covariance matrices corresponding to different domains are simultaneously factorized to recover a common mixing matrix, with identifiability arising from variability across domains. MuDo-CoM generalizes this idea by allowing the latent covariance to have \emph{arbitrary dependency} and by explicitly \emph{modeling domain-specific diagonal scalings}, while still utilizing cross-domain variability through covariance matching. In this sense, MuDo-CoM extends joint diagonalization from independent latent sources to dependent latent variables.

%% file: pages_new/related_work.tex
\section{Related Work}

\cref{tab: baselines} summarizes the most closely related approaches to our work. They include classical ICA \citep{Hyva99TNN}, blind source separation via joint diagonization \citep{ablin2018beyond}, three-way tensor decompositions \citep{harshman1970foundations}, dependent factor analysis \citep{monti2018unified}, and more recent lines of work that study identifiability of deep generative models and VAEs under various auxiliary conditions \citep{khemakhem2020variational}. We summarize their key assumptions and limitations regarding latent dependence, the parametric structure of $\Zb$, and the properties of the mixing matrix $A$, and discuss them, as well as other related methods, in detail in the following.

\begin{table*}[t]
\caption{Comparison of closely related approaches in terms of assumptions on latent dependence, parametric assumptions on the latent variables $\Zb$, and structural assumptions on the mixing matrix $A$. Both QNDIAG and PARAFAC implicitly assume that latent components are uncorrelated within each domain, because QNDIAG performs joint diagonalization of a collection of covariance matrices, and PARAFAC does not model cross-interactions between latent components. For more details, see App.~\ref{app: baselines}.}
\label{tab: baselines}
\centering
\begin{tabular}{lccc}
\hline
\textbf{Method} & \textbf{Dependence} & \textbf{Para. Assump. on $\Zb$} & \textbf{Struc. Assump. on $A$} \\
\hline
FastICA \citep{Hyva99TNN}
& $\Zb_i \perp \Zb_j$ 
& non-gaussian 
& \gcircle None \\

QNDIAG  \citep{ablin2018beyond}
& cov$(\Zb_i, \Zb_j)=0$ 
& \gcircle None
& \gcircle None\\

PARAFAC-COV \citep{harshman1970foundations}
& cov$(\Zb_i, \Zb_j)=0$
& \gcircle None
& \gcircle None\\

iVAE \citep{khemakhem2020variational}
& $\Zb_i \perp \Zb_j| U$
& Cond. exponential family
& \gcircle None\\

MHA \citep{monti2018unified}
& \gcircle None
& \gcircle None 
& Orthonormal + nonnegative\\

MuDo-CoM (Our method)
& \gcircle None
& \gcircle None
& \gcircle None\\

\hline
\end{tabular}
\end{table*}

\paragraph{(Non)linear ICA methods.}
Linear ICA aims to recover latent sources from linear entangled measurements under the assumption that sources are statistically independent. Foundational identifiability results were introduced in \citet{COMON1994,Cardoso1998,hyvarinen2001independent}, and practical algorithms such as FastICA \citep{Hyva99TNN} remain widely used. Methods for multi-domain linear ICA are widely used in fMRI analysis, but they 
still assume independent, non-Gaussian components \citep{erhardt2011comparison}.
\citet{Hyva00NC_complex,Bach03,Koster10NC} proposed models for dependencies of components, but no such method incorporates arbitrary linear correlations.
For the  non-linear ICA setting, \citet{khemakhem2020variational, kivva2022identifiability,lachapelle2022disentanglement} use auxiliary variables that parameterize latent distributions to recover sources. Temporal nonlinear ICA approaches use nonstationarity or temporal order \citep{Hyva23patterns}.  Some of these methods could be directly used for multi-domain data. However, all of the above approaches rely on (conditional) independence assumptions on the latent variables,  whereas our setting allows arbitrary dependencies among the latent components. 

\paragraph{Blind source separation (BSS).}
\new{Current approaches to BSS rely on distinct structural or distributional assumptions about the latent variables, whereas our framework avoids imposing parametric constraints on their distributions and instead requires a linear mixing function together with domain-dependent scaling. For instance, non-negative matrix factorization and simplex-structured factorizations assume non-negativity of the observations, while bounded component analysis \citep{6288270} restricts the latent support to a bounded geometric set. Similarly, tree-dependent component analysis \citep{bach2003beyond} enforces a tree-structured dependency among latent variables, and dependent source separation method \citep{caiafa2012conditions} assume linear dependencies between components. In contrast, our approach allows arbitrary latent distributions and dependency structures, trading these assumptions for structural constraints on the mixing process across domains. Another line of work in BSS, which is closer to our idea, is the use of nonstationarity. }\citet{pham2001blind} proposed joint diagonalization of covariance matrices to separate Gaussian sources with nonstationary power (variance). While this was originally formulated in terms of time segments and non-stationarity, it is equivalent to a multi-domain method. \citet{ablin2018beyond} proposed a new quasi-Newton method for solving the joint diagonalization problem that is central in the computational implementation. These approaches target models where the latent covariance matrices are all diagonal, and thus, the latent variables are uncorrelated, which is in contrast to our setting.

\paragraph{Tensor decompositions.} Some tensor decomposition approaches can be adapted to recover latent variables from linearly mixed observations collected across multiple domains, although precise identifiability analysis seems to be lacking. Multiway tensor decompositions such as PARAFAC/CANDECOMP \citep{harshman1970foundations,carroll1970analysis} provide strong uniqueness guarantees under rank conditions, with modern implementations available in TensorLy. 
\new{PARAFAC2 \citep{kiers1999parafac2} extends it to handle tensors with one mode varying in size across slices (e.g., differing numbers of rows), whereas PARAFAC assumes consistent dimensions across all modes. In our setup, each tensor corresponds to a covariance matrix of the same size, so PARAFAC2 would not add useful flexibility.}
\citet{domanov2013uniqueness,domanov2014uniqueness,sidiropoulos2017tensor} have broadened the uniqueness theory.
Among those, \citet{harshman1970foundations} is the closest to our setting: it can be interpreted as saying that the mixing matrices are allowed to vary across groups but are constrained to be identical up to column-wise scalings. When the group-specific scaling matrices are sufficiently distinct, the latent sources become identifiable by decomposing the set of covariance matrices. However, again, they require the sources to be uncorrelated.

\paragraph{Causal representation learning} \citep[CRL;][]{scholkopf2021toward}
studies identifiability of latent variables from high-dimensional observations, when these variables are dependent, in particular due to causal relations between them.  
Current CRL methods require additional information or parametric assumptions, such as interventional data \citep{brehmer2022weakly,lippe2022citris,von2023nonparametric, buchholz2023learning, zhang2023identifiability, ahuja2023interventional}, temporal structure \citep{lippe2023biscuit,lachapelle2022disentanglement, lachapelle2024nonparametric,li2025towards}, multi-view settings \citep{yao2023multi,ahuja2024multi, morioka2023causal}, hierarchical structures \citep{kong2023identification,kong2024learning} or conditions on causal graphs \citep{zhang2024causal,dong2024on,song2024causal}. In particular, similarly to our method, some CRL works explicitly study linear observation models, providing identifiability guarantees under interventions, auxiliary variables, or sparse latent support  \citep{pmlr-v202-squires23a,pmlr-v235-xu24ac, NEURIPS2024_741aab8b, JMLR:v26:24-0194}. 
\new{In contrast, in our work, we do not use any causal interpretation of the  variables nor make any assumption on why or how they might be dependent. For example, this can be due to causal relations, but also latent confounding or selection bias. When our method is applied to CRL, which assumes that the variables have a causal interpretation and the dependences are due to causal relations, scaling can be seen as a ``global'' soft intervention on all nodes simultaneously. In this setting, similar work like \citep{JMLR:v26:24-0194} could not be applied since it requires sufficiently diverse intervention targets, while \citep{zhang2024causal} would allow only for identifiability up to the intimate set of the causal variables. }

%% file: pages_new/experiment.tex
\section{Experimental Results}
\label{sec: experiments}
We first assess the performance of the proposed model using simulated data and then present an application to a preprocessed resting-state fMRI dataset \citep{monti2020interpretable}. Details are in App.~\ref{app: implementation}.

\subsection{Simulated Datasets}
\label{subsec: numberical}

\begin{table*}[t]
\caption{Results for the numerical experiments: Average d-MCC $\pm$ Std over 5 random seeds. The bold font indicates which parameters are varying in each block of rows and the best results among all methods. 
}
\label{tab: numerical}
\begin{center}
\begin{small}
\begin{sc}
\footnotesize
\begin{tabular}{@{\hskip 2pt}c@{\hskip 2pt} 
@{\hskip 2pt}c@{\hskip 2pt} 
@{\hskip 2pt}c@{\hskip 2pt} 
c 
@{\hskip 2pt}c@{\hskip 2pt} 
@{\hskip 2pt}c@{\hskip 2pt} 
@{\hskip 4pt}c@{\hskip 5pt}
@{\hskip 4pt}c@{\hskip 5pt}
@{\hskip 4pt}c@{\hskip 5pt}
@{\hskip 4pt}c@{\hskip 5pt}
@{\hskip 4pt}c@{\hskip 5pt}
@{\hskip 4pt}c@{\hskip 2pt}}
\hline
$n$ & $d$ & $T$ &  $N$ &  $k$ & SCMs            & MuDo-CoM          & FastICA         & QNDIAG       & PARAFAC         & iVAE        & MHA \\ \hline
\textbf{5}  & $n$ & 2$n$ &  5000 & 2 & Gauss & \textbf{0.999$\pm$.000} & 0.786$\pm$.056 & 0.814$\pm$.049& 0.688$\pm$.151& 0.808$\pm$.062& 0.783$\pm$.033\\
\textbf{10} & $n$ & 2$n$ & 5000 & 2 & Gauss & \textbf{0.999$\pm$.001} & 0.783$\pm$.013 & 0.823$\pm$.019 & 0.477$\pm$.200 & 0.735$\pm$.035& 0.726$\pm$.039\\
\textbf{20} & $n$ & 2$n$ & 5000 & 2 & Gauss  & \textbf{0.998$\pm$.001} & 0.791$\pm$.021 & 0.835$\pm$.018& 0.319$\pm$.081& 0.711$\pm$.075& 0.618$\pm$.018\\
\hline
10 & \textbf{1}$n$ & 2$n$ & 5000 & 2 & Gauss  & \textbf{0.999$\pm$.001} & 0.783$\pm$.013 & 0.823$\pm$.019& 0.477$\pm$.200& 0.735$\pm$.035& 0.726$\pm$.039\\
10 & \textbf{2}$n$ & 2$n$ & 5000 & 2 & Gauss  & \textbf{0.999$\pm$.001} & 0.783$\pm$.013 & 0.823$\pm$.019& 0.509$\pm$.167& 0.734$\pm$.017& 0.687$\pm$.027\\
10 & \textbf{4}$n$ & 2$n$ & 5000 & 2 & Gauss & \textbf{0.997$\pm$.004} & 0.783$\pm$.013 & 0.823$\pm$.019& 0.464$\pm$.204& 0.792$\pm$.022& 0.689$\pm$.031\\
10 & \textbf{10}$n$ & 2$n$ & 5000 & 2 & Gauss & \textbf{0.999$\pm$.000} & 0.783$\pm$.013 & 0.823$\pm$.019& 0.453$\pm$.189& 0.806$\pm$.025& 0.675$\pm$.051\\
\hline
10 & $n$ & \textbf{1}$n$ & 5000 & 2 & Gauss  & \textbf{0.998$\pm$.000} & 0.782$\pm$.035 & 0.820$\pm$.028 & 0.678$\pm$.145& 0.766$\pm$.041& 0.687$\pm$.014\\
10 & $n$ & \textbf{2}$n$ & 5000 & 2 & Gauss & \textbf{0.999$\pm$.001} & 0.783$\pm$.013 & 0.823$\pm$.019& 0.477$\pm$.200& 0.735$\pm$.035& 0.726$\pm$.039\\
10 & $n$ & \textbf{3}$n$ & 5000 & 2 & Gauss & \textbf{0.991$\pm$.016} & 0.818$\pm$.018 & 0.857$\pm$.021& 0.465$\pm$.191& 0.759$\pm$.035& 0.672$\pm$.018\\
\hline
10 & $n$ & 2$n$ & \textbf{200} & 2 & Gauss  & \textbf{0.900$\pm$.087} & 0.784$\pm$.013 & 0.823$\pm$.018& 0.512$\pm$.187& 0.662$\pm$.034& 0.728$\pm$.038\\
10 & $n$ & 2$n$ & \textbf{1000} & 2 & Gauss & \textbf{0.988$\pm$.014} & 0.784$\pm$.014 & 0.823$\pm$.019& 0.581$\pm$.147& 0.678$\pm$.013& 0.727$\pm$.040\\
10 & $n$ & 2$n$ & \textbf{5000} & 2 & Gauss & \textbf{0.999$\pm$.001} & 0.783$\pm$.013 & 0.823$\pm$.019& 0.477$\pm$.200& 0.735$\pm$.035& 0.726$\pm$.039\\
10 & $n$ & 2$n$ & \textbf{10000} & 2 & Gauss & \textbf{1.000$\pm$.000} & 0.783$\pm$.013 & 0.823$\pm$.019& 0.518$\pm$.198& 0.744$\pm$.050& 0.727$\pm$.038\\
\hline
10 & $n$ & 2$n$ & 5000 & \textbf{0}  & Gauss  & 0.994$\pm$.004 & \textbf{1.000$\pm$.000} & \textbf{1.000$\pm$.000} & 0.954$\pm$.091& 0.914$\pm$.035& 0.507$\pm$.027\\
10 & $n$ & 2$n$ & 5000 & \textbf{1}  & Gauss  & \textbf{0.999$\pm$.001} & 0.850$\pm$.017 & 0.883$\pm$.010& 0.583$\pm$.135& 0.782$\pm$.047& 0.635$\pm$.025\\
10 & $n$ & 2$n$ & 5000 & \textbf{2}  & Gauss  & \textbf{0.999$\pm$.001} & 0.783$\pm$.013 & 0.823$\pm$.019& 0.477$\pm$.200& 0.735$\pm$.035& 0.726$\pm$.039\\
10 & $n$ & 2$n$ & 5000 & \textbf{3}  & Gauss  & \textbf{0.999$\pm$.001} & 0.745$\pm$.016 & 0.791$\pm$.009& 0.474$\pm$.091&0.728$\pm$.025& 0.776$\pm$.017\\
\hline
10 & $n$ & 2$n$ & 5000 & 2 & \textbf{Gauss}   & \textbf{0.999$\pm$.001} & 0.783$\pm$.013 & 0.823$\pm$.019& 0.477$\pm$.200& 0.735$\pm$.035& 0.726$\pm$.039\\
10 & $n$ & 2$n$ & 5000 & 2 & \textbf{Exp}     & \textbf{0.991$\pm$.016} & 0.782$\pm$.014 & 0.823$\pm$.019& 0.503$\pm$.164& 0.723$\pm$.025& 0.726$\pm$.039\\
10 & $n$ & 2$n$ & 5000 & 2 & \textbf{Gumbel} & \textbf{0.996$\pm$.007} & 0.786$\pm$.017 & 0.823$\pm$.020& 0.488$\pm$.228& 0.746$\pm$.039& 0.726$\pm$.039\\
10 & $n$ & 2$n$ & 5000 & 2 & \textbf{Nonlinear} & \textbf{0.999$\pm$.001} & 0.857$\pm$.049 & 0.960$\pm$.011& 0.527$\pm$.143& 0.881$\pm$.028& 0.587$\pm$.021\\

\hline
\end{tabular}
\end{sc}
\end{small}
\end{center}
\end{table*}

\paragraph{Data generation.}
We generate simulated datasets from the proposed model. The dependencies among the latent variables are created by an underlying structural causal model (SCM) like in CRL.
We consider $n=\{5,10,20\}$ latent variables $\Zb$. For each experiment, we sample a directed acyclic graph $\Dcal$ from an Erd\"os-R\'enyi (ER)-$k$ model with $k\in \{0,1,2,3\}$, where each ER-$k$ graph contains $n\cdot k$ directed edges. In particular, ER-0 corresponds to an empty graph, which implies independent latent variables. Given $\Dcal$, we consider two types of SCM: i) a linear SCM where edge weights are sampled uniformly from $[-2, -0.5]\cup[0.5, 2]$ and we use three type of noises: standard Gaussian, exponential with scale 1 and Gumbel; ii) a nonlinear SCM, where we simulate a nonlinear function with a linear layer, a leaky-ReLU activation and a fully connected layer with 100 hidden units, where edge weights scaled by input dimension and noises are both sampled from standard Gaussian.
To simulate the observed variables $\Xb^{\tau}$, we define the number of domains $T=\{1,2,3\}\times n$. In each domain $\tau\in[T]$, we first scale the latent variables by a diagonal domain-specific matrix $D^\tau$ with entries sampled uniformly in $[0.2,1]$, and then apply a full-column mixing matrix $A$ to the latent variables $\Zb$. The mixing matrix $A \in \mathbb{R}^{d \times n}$ is generated randomly depending on $d\in \{1,2,4,10\}\times n$. 
For each domain $\tau$, we generate observed variables $\Xb^{\tau}=AD^\tau \Zb$ sample sizes $N \in \{200,1000,5000,10000\}$.
We average the results for each setup over 5 seeds.

\paragraph{Baselines.} 
We summarize the baselines in Tab.~\ref{tab: baselines}.\new{As described in App.~\ref{app: baselines}, QNDIAG, PARAFAC-COV, MHA and iVAE can be directly used in multi-domain settings, while for ICA, we concatenate multi-domain data as a single dataset with a mixture model, which can ensure the non-Gaussianity is satisfied for ICA.}

\paragraph{Evaluation Metric.} We use three different evaluation metrics. 
The main one is 
Domain-Wise MCC (d-MCC) that we propose as a multi-domain version of the Mean Correlation Coefficient \citep{khemakhem2020variational}. We compute MCC separately for each domain and report their average, capturing recovery up to permutation and  sign inside each domain. This is to be contrasted with the standard MCC, which does not take the domain-specific sign indeterminacy into account, so it is inadequate in a multi-domain setting where the signs can vary between domains (Thm.~\ref{thm: same sign D}). 
We also report the Amari distance \citep{amari1995new}, a standard ICA metric quantifying deviations of the estimated unmixing matrix from the ground truth up to permutation and scaling. Details are in App.~\ref{app: metrics}.

\paragraph{Results.}
Tab.~\ref{tab: numerical} reports the identification accuracy for different values of $n$, $d$, $T$, $N$, $k$, and SCMs. Our method consistently achieves (near-)perfect recovery across all settings. It has the best performance in all cases except the one that correpond to single-domain ICA. In contrast, all  baselines except MHA assume independent or uncorrelated latents, and show clear degradation as there are more dependencies between the latents when the degree $k$ increases. Varying the latent dimension $n$ and observation dimension $d$ shows that our method, FastICA, and QNDIAG scale well with model size, while PARAFAC, iVAE, and MHA deteriorate considerably. W.r.t. sample size $N$, our estimator achieves high accuracy at $N=200$, and increasing the sample size $N$ leads to perfect recovery. Finally, we show robustness across Gaussian, Exponential, and Gumbel noise models, as well as nonlinear SCMs. We provide more results, including MCC and Amari distance, in App.~\ref{app: num experiments}.

\begin{figure*}[t]
\centering
\setlength{\tabcolsep}{2pt}
\renewcommand{\arraystretch}{1.05}

\begin{tabular}{>{\centering\arraybackslash}p{0.09\textwidth} c c c c}
 & 
\textbf{\scriptsize MuDo-CoM (Our method)} &
\textbf{\scriptsize FastICA} &
\textbf{\scriptsize QNDIAG} &
\textbf{\scriptsize PARAFAC} \\[-3pt]

\raisebox{5ex}{\textbf{\scriptsize Latent 1}} &
\includegraphics[width=0.20\textwidth]{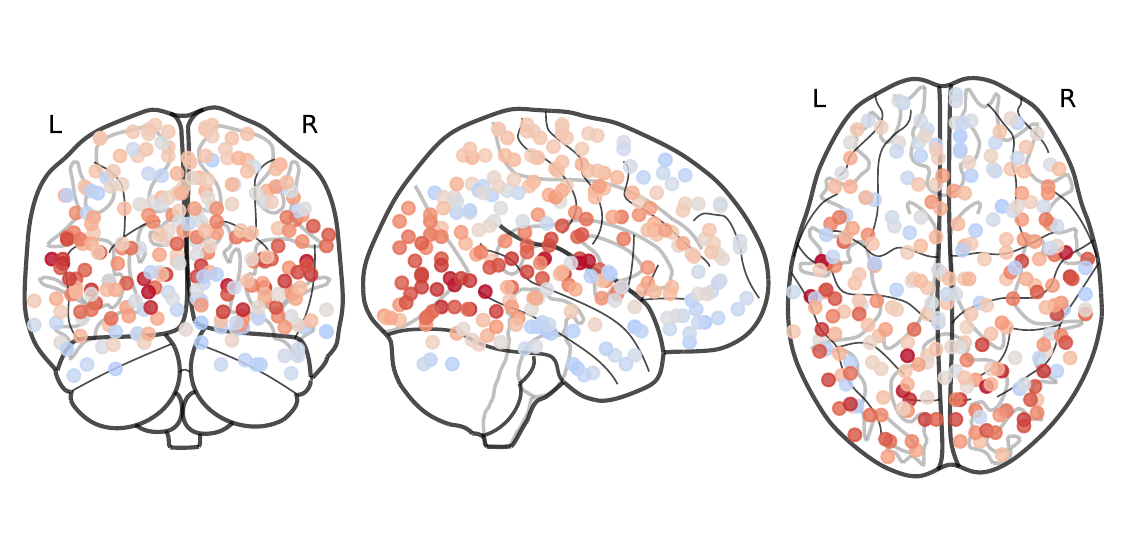}  \hspace{4pt}&
\includegraphics[width=0.20\textwidth]{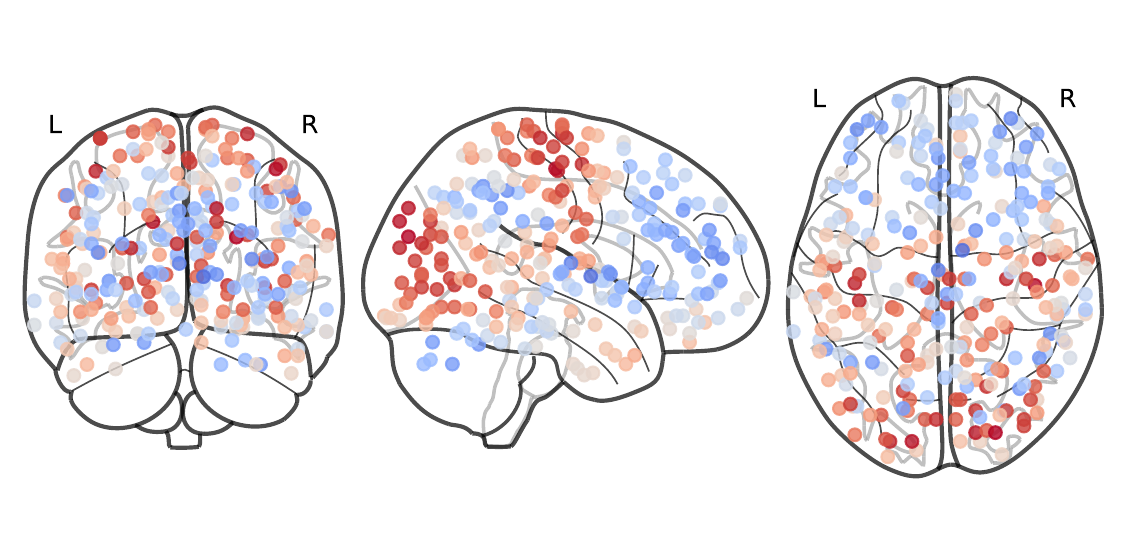}  \hspace{4pt}&
\includegraphics[width=0.20\textwidth]{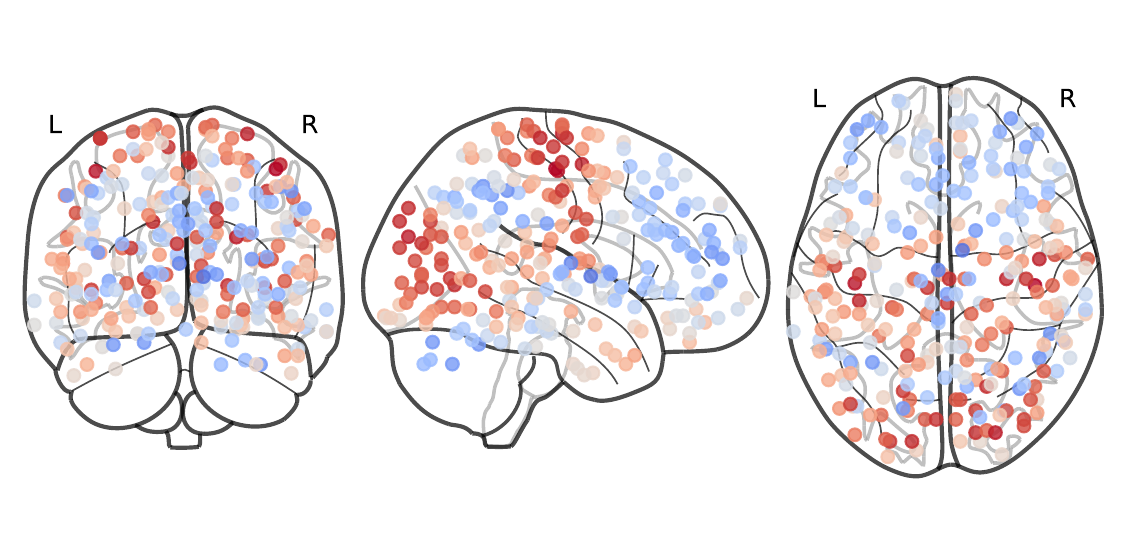}  \hspace{4pt}&
\includegraphics[width=0.20\textwidth]{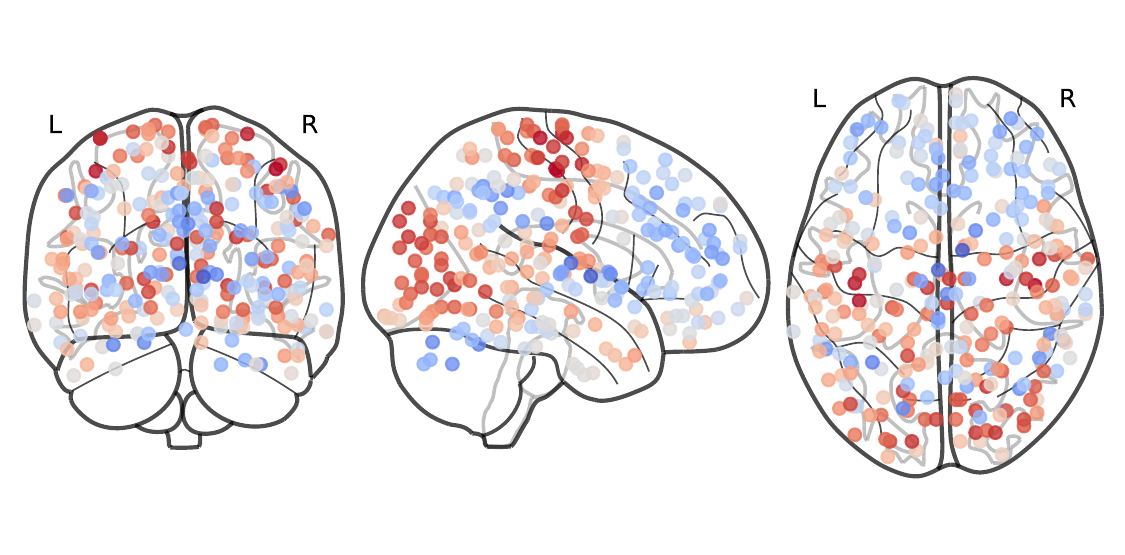} \hspace{4pt} \\[-10pt]

\raisebox{5ex}{\textbf{\scriptsize Latent 2}} &
\includegraphics[width=0.20\textwidth]{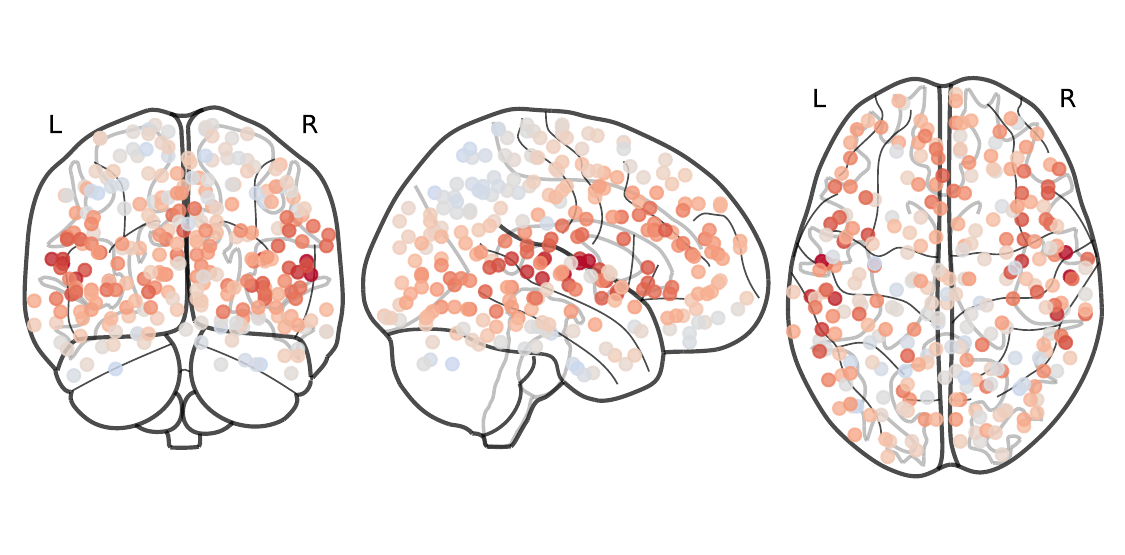} \hspace{4pt} &
\includegraphics[width=0.20\textwidth]{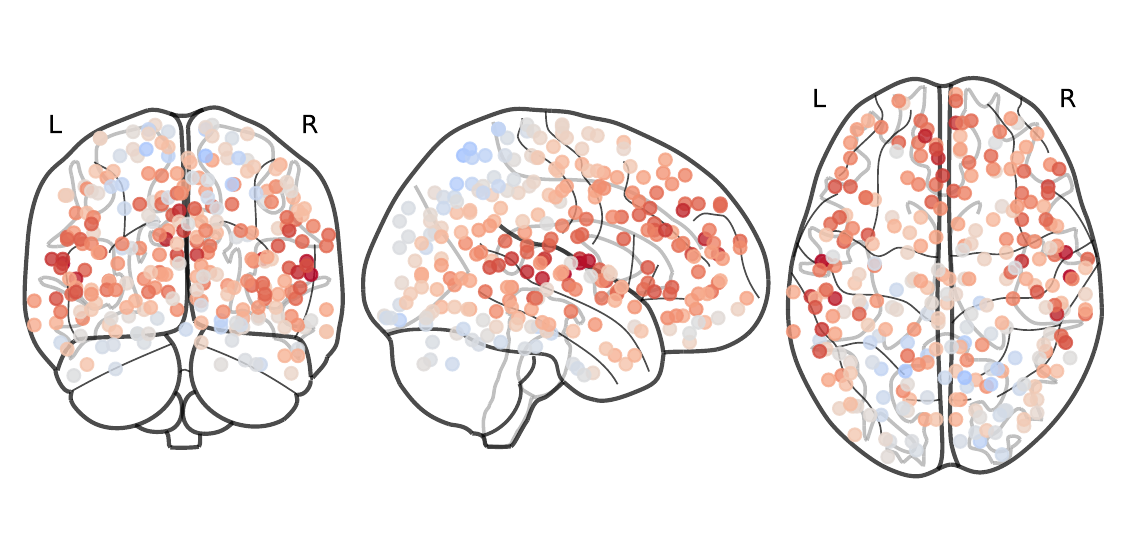}  \hspace{4pt}&
\includegraphics[width=0.20\textwidth]{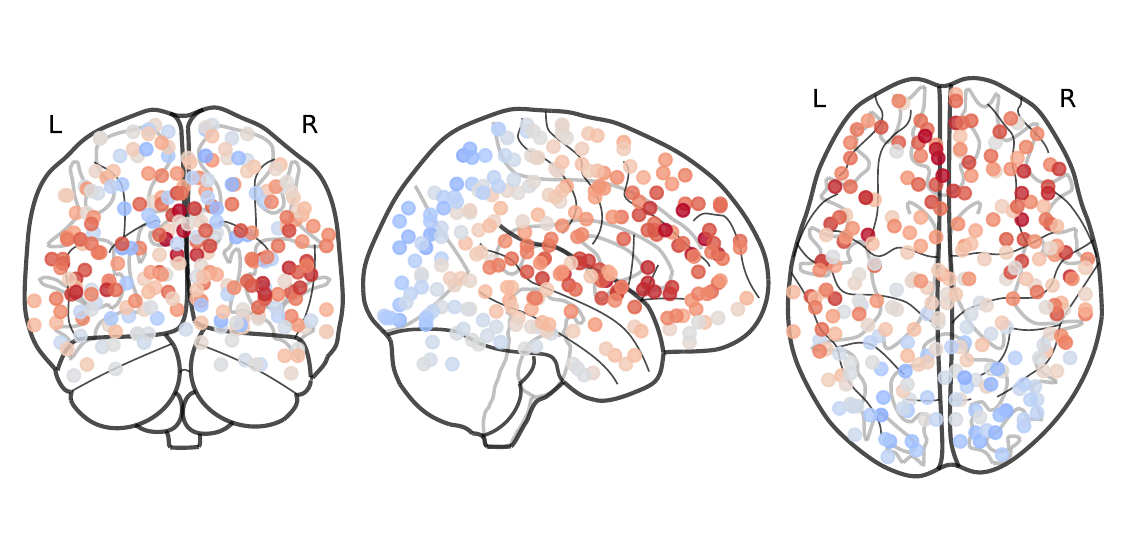} \hspace{4pt} &
\includegraphics[width=0.20\textwidth]{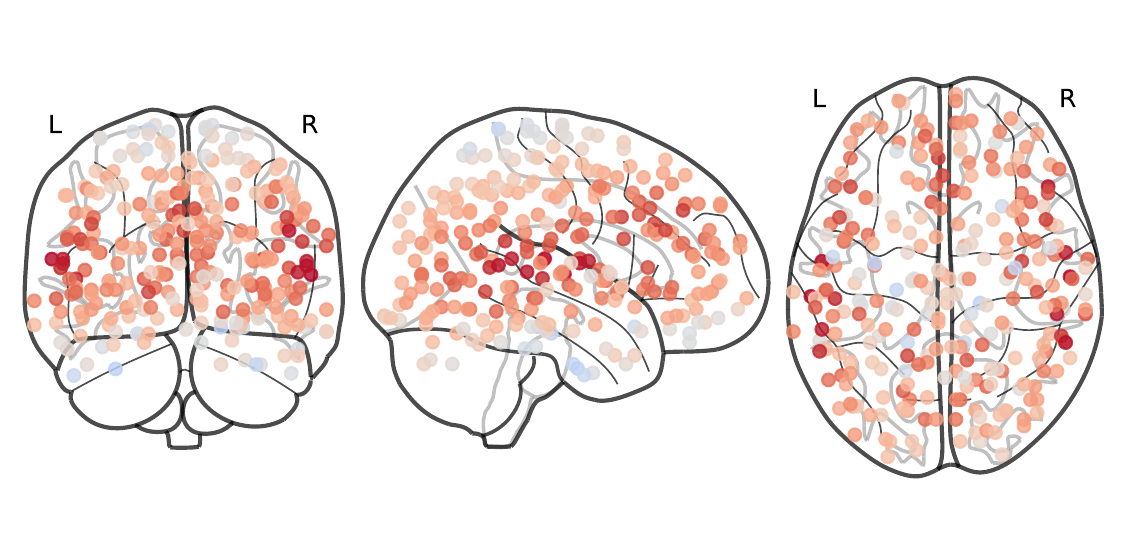} \hspace{4pt} \\[-10pt]

\raisebox{5ex}{\textbf{\scriptsize Latent 3}} &
\includegraphics[width=0.20\textwidth]{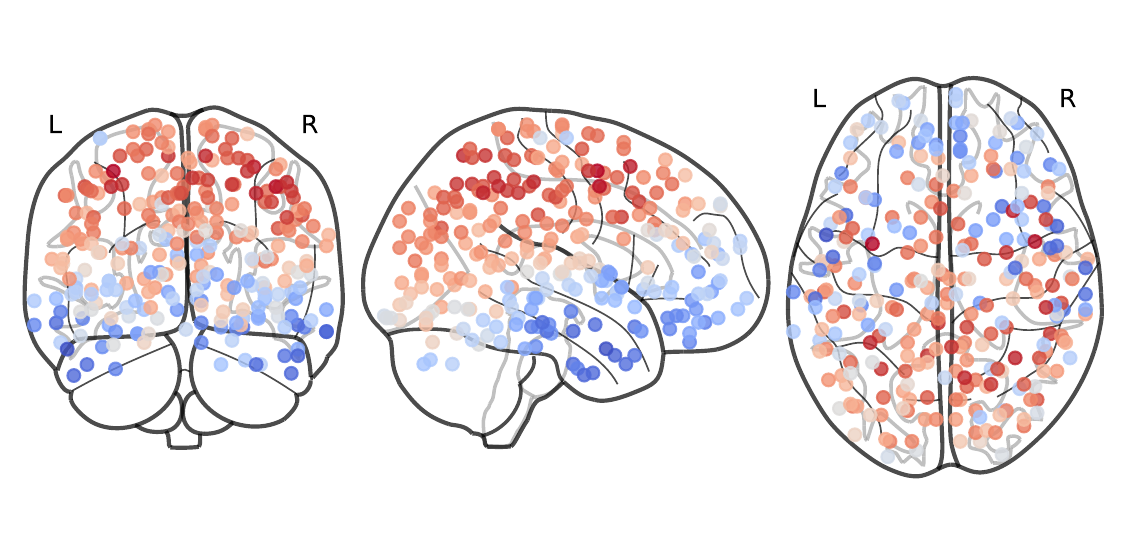}  \hspace{4pt}&
\includegraphics[width=0.20\textwidth]{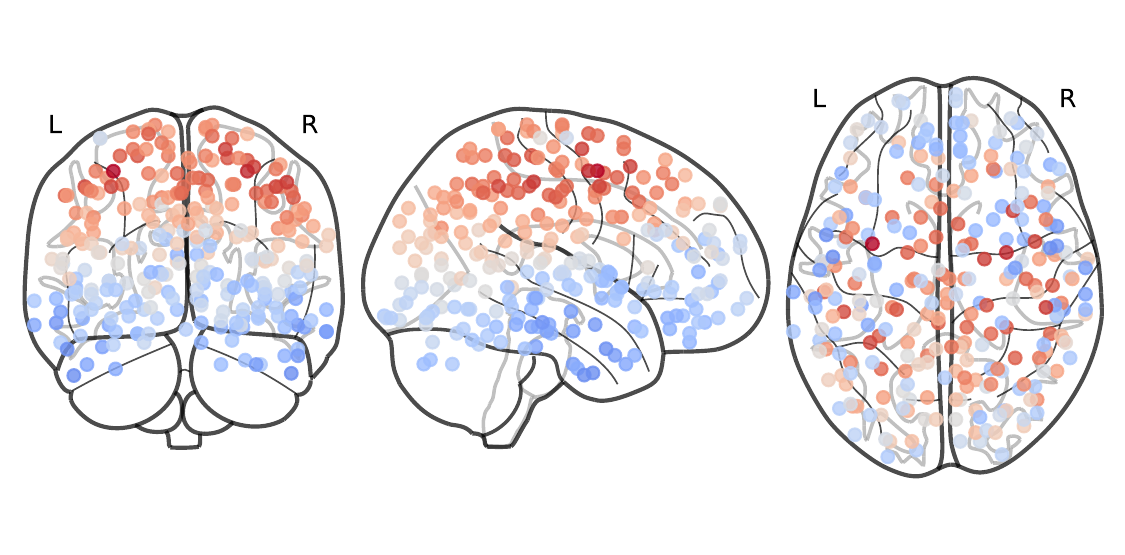}  \hspace{4pt}&
\includegraphics[width=0.20\textwidth]{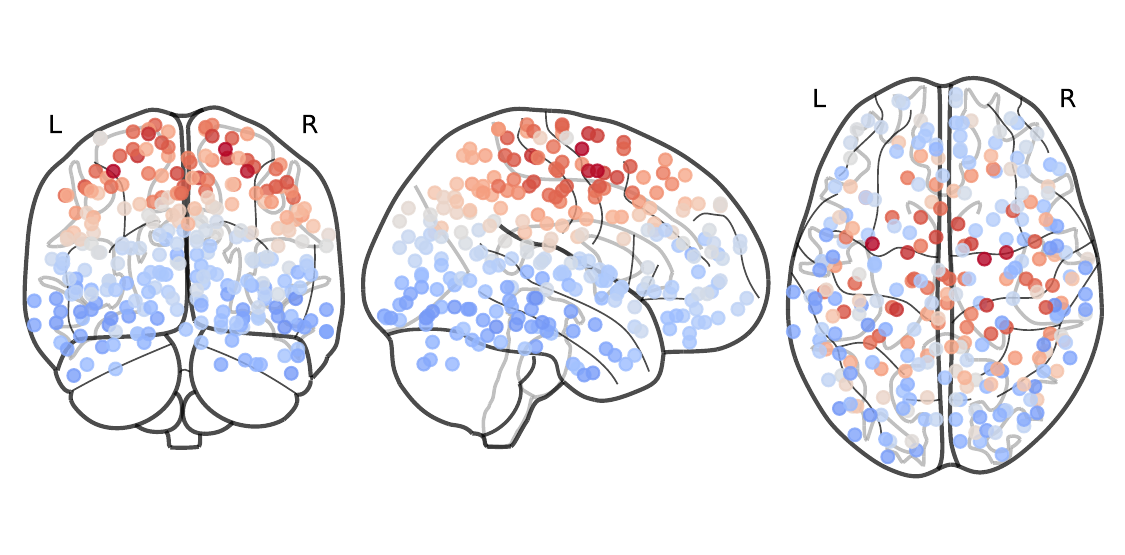}  \hspace{4pt}&
\includegraphics[width=0.20\textwidth]{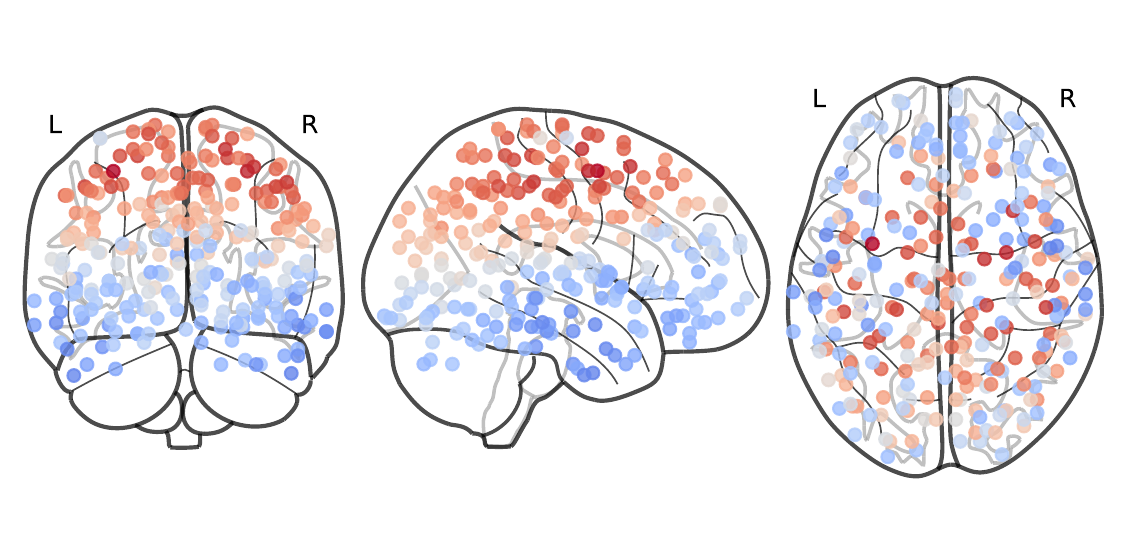}  \hspace{4pt}\\[-10pt]

\raisebox{5ex}{\textbf{\scriptsize Latent 4}} &
\includegraphics[width=0.20\textwidth]{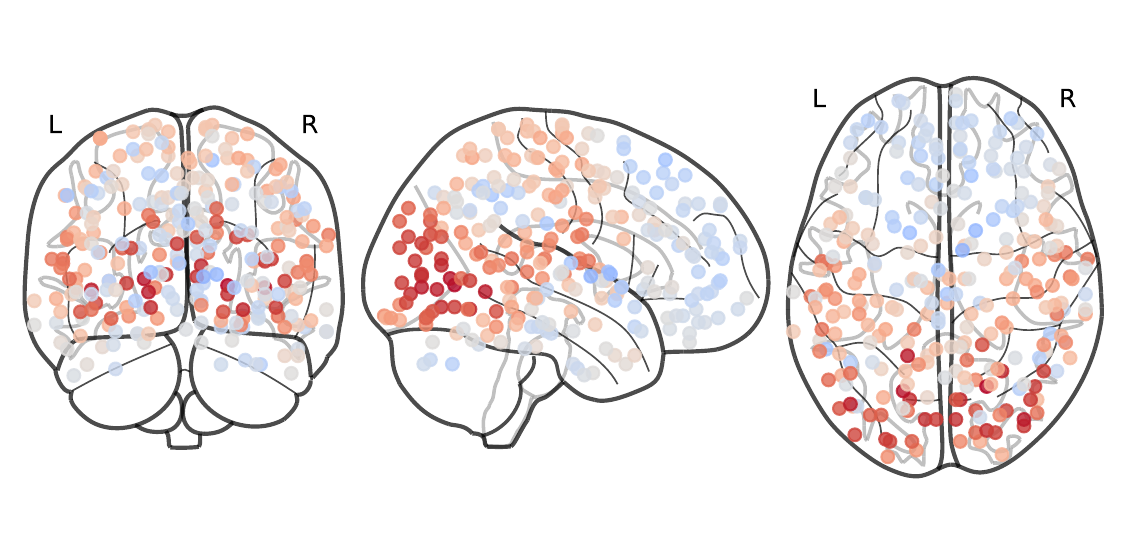} \hspace{4pt} &
\includegraphics[width=0.20\textwidth]{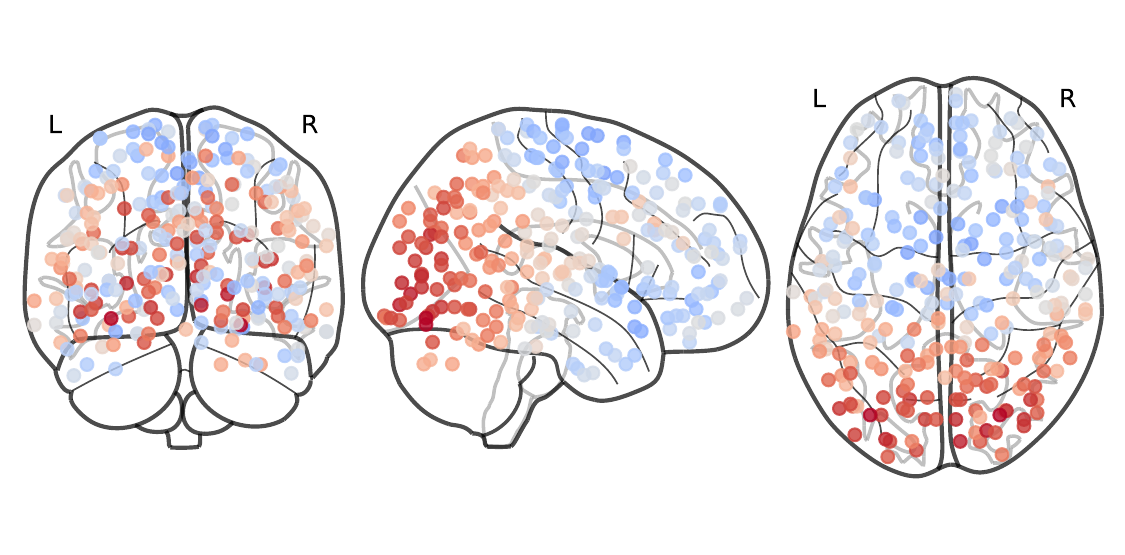} \hspace{4pt} &
\includegraphics[width=0.20\textwidth]{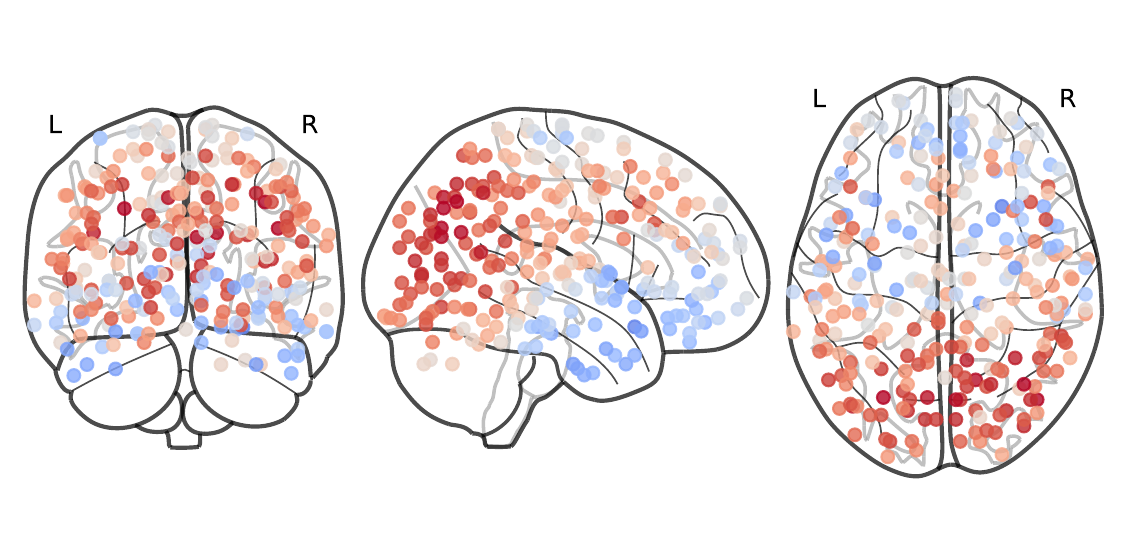} \hspace{4pt} &
\includegraphics[width=0.20\textwidth]{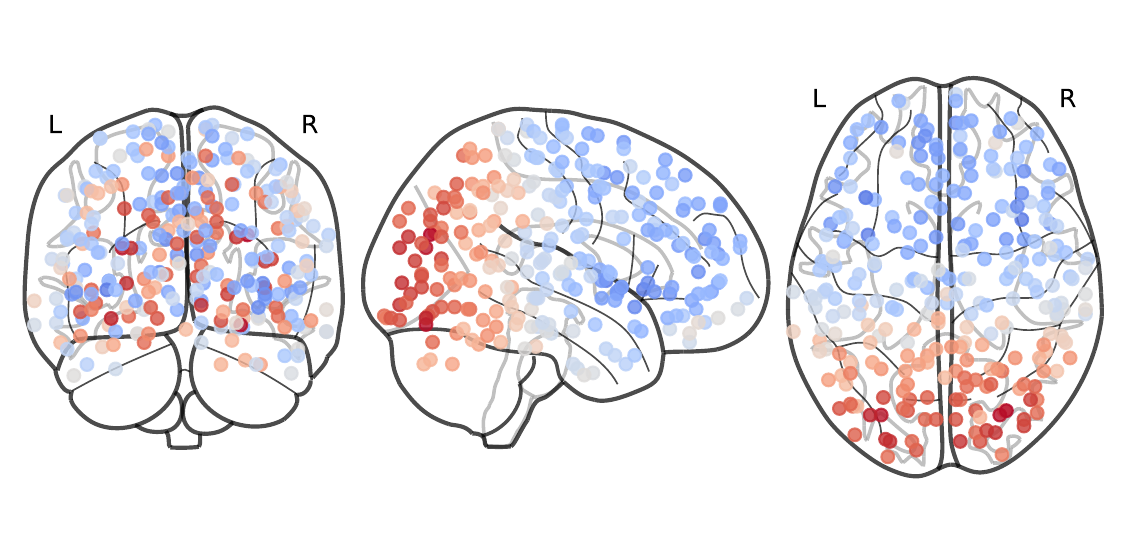} \hspace{4pt} \\[-10pt]

\raisebox{5ex}{\textbf{\scriptsize Latent 5}} &
\includegraphics[width=0.20\textwidth]{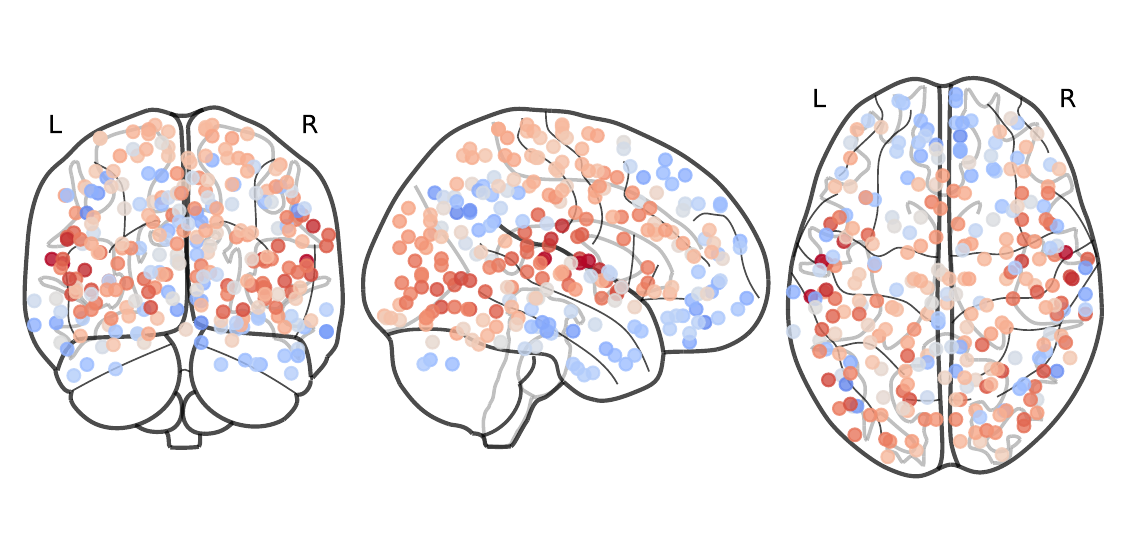}  \hspace{4pt}&
\includegraphics[width=0.20\textwidth]{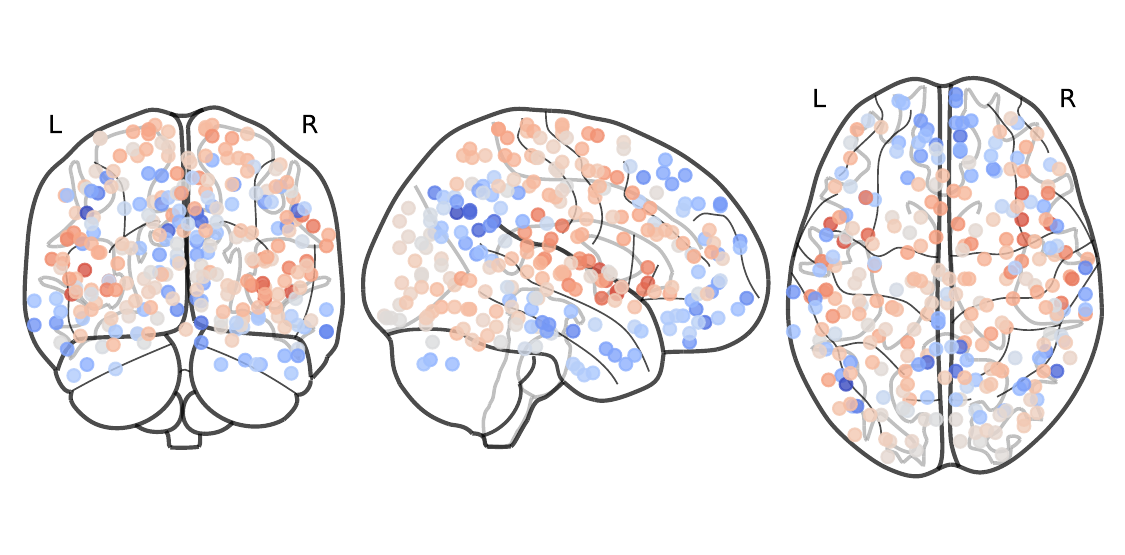} \hspace{4pt} &
\includegraphics[width=0.20\textwidth]{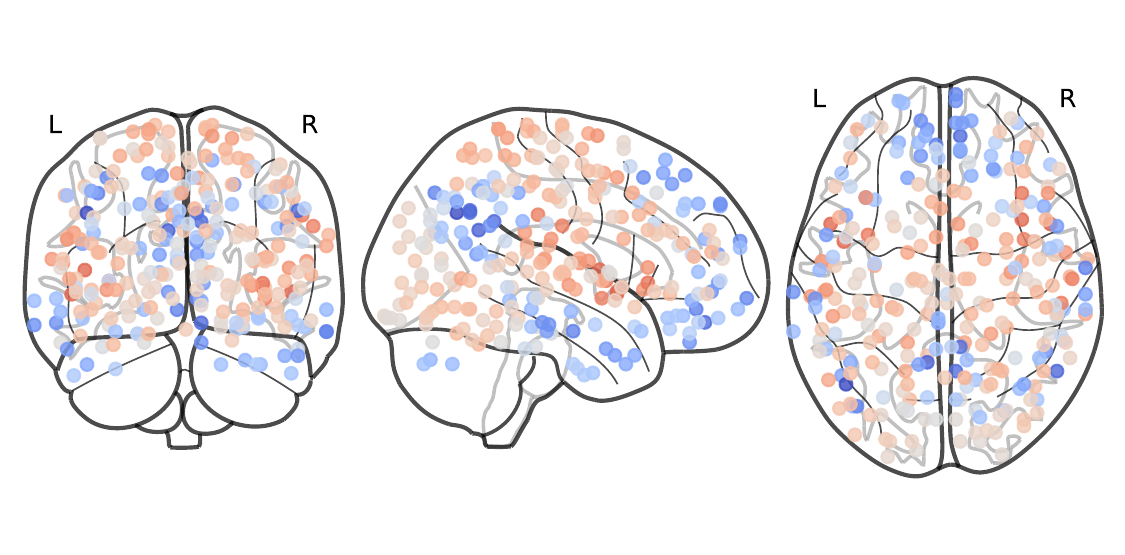} \hspace{4pt} &
\includegraphics[width=0.20\textwidth]{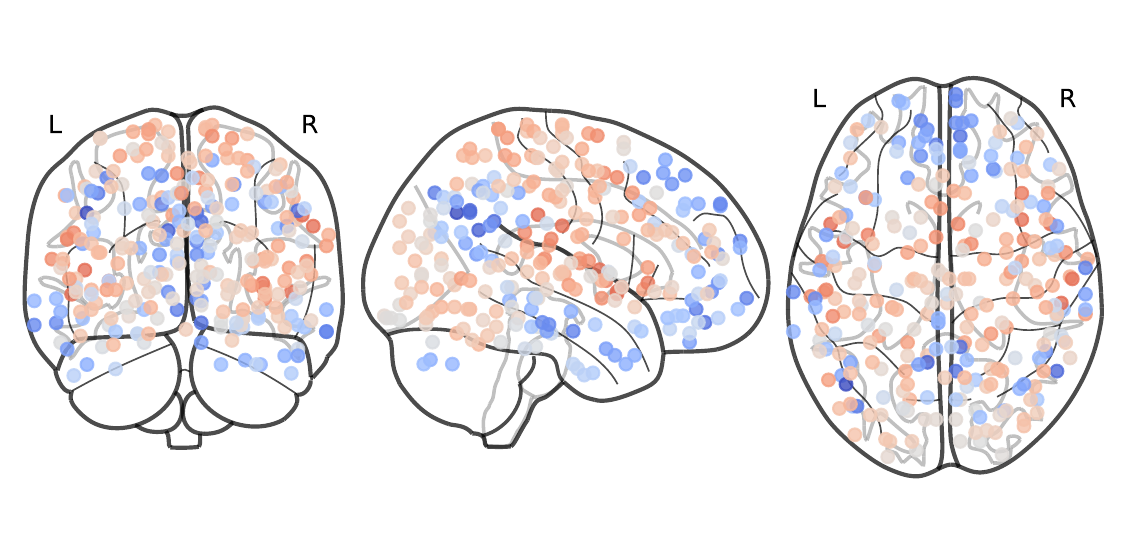} \hspace{4pt} \\

\raisebox{10ex}{\textbf{\scriptsize Corr($\widehat{\Zb}$)}}  &
 \includegraphics[width=0.20\textwidth]{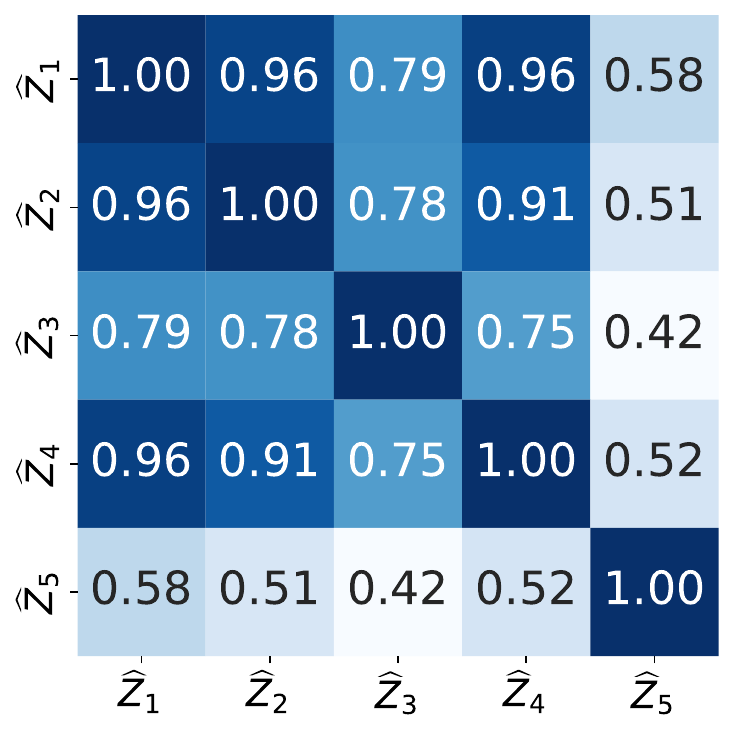}  \hspace{2pt}&
\includegraphics[width=0.20\textwidth]{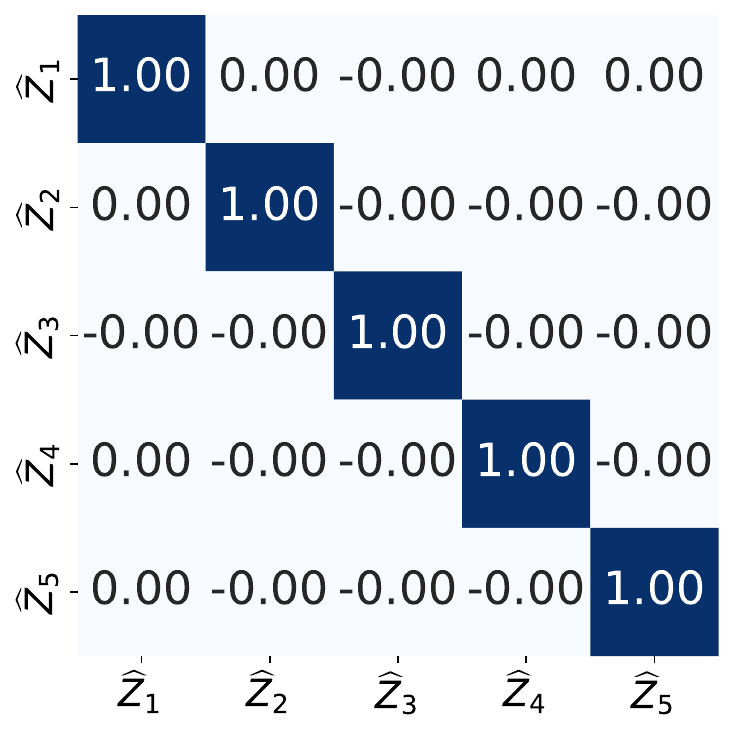}  \hspace{2pt}&
\includegraphics[width=0.20\textwidth]{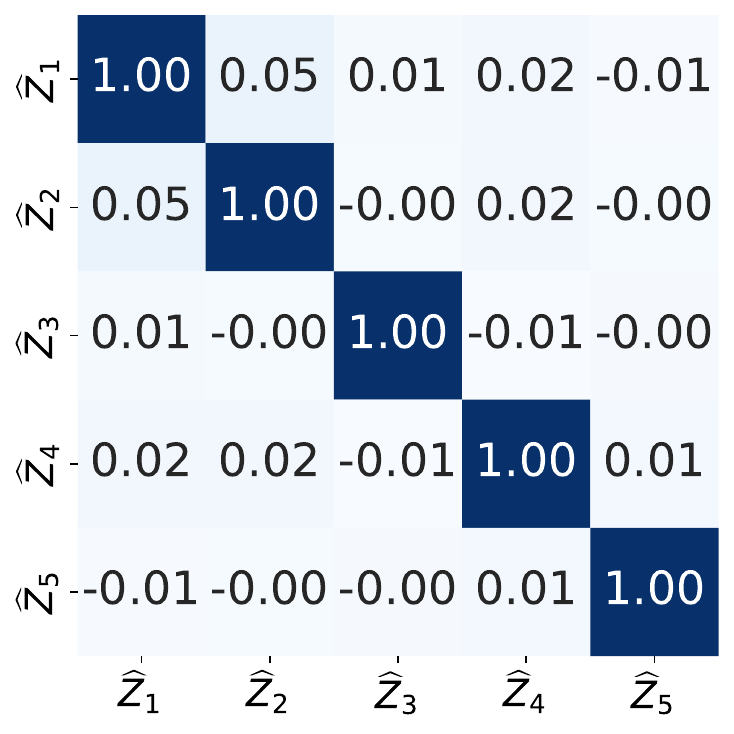}  \hspace{2pt}&
\includegraphics[width=0.20\textwidth]{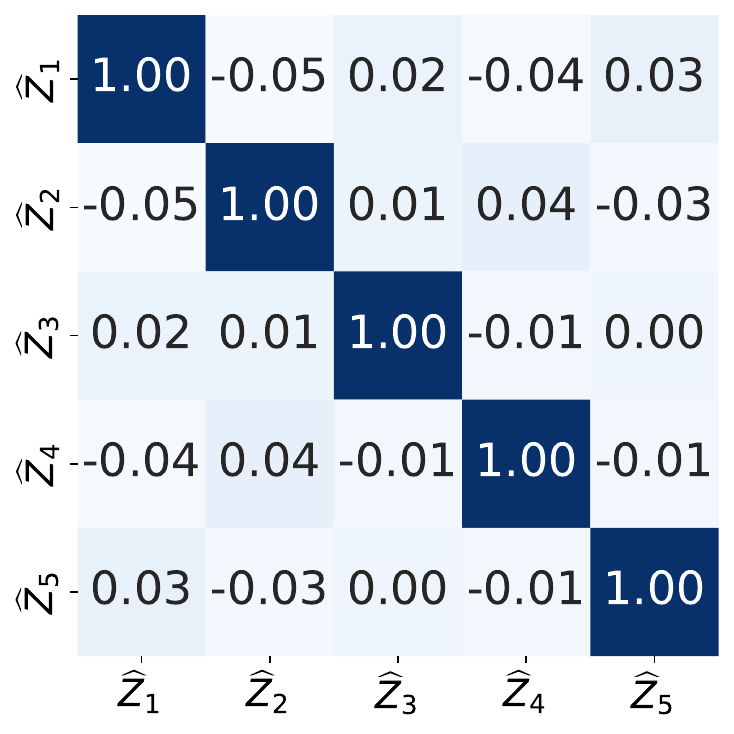}  \hspace{2pt}
\\

\end{tabular}

\caption{Visualization of the estimated components for the fMRI data for one random seed. Each column gives the five columns of the matrix $\hat{A}$ for one method, 
after aligning permutation and signs. Red means positive activation in a parcel, while blue means negative activity; the brain is viewed from three different angles. The last row shows the sample correlation matrix of 
the recovered variables 
$\hat{\Zb}$. 
}
\label{fig: summary of camcan}
\end{figure*}

\new{In App.~\ref{app: discuss assump} we discuss our results when  our assumptions are violated. In App.~\ref{app: vio noiseless}, we show empirically that our method has a robust performance also in a noisy mixing setting. On the other hand, we do need all other assumptions. We provide an evaluation showing the necessity of linearity for the mixing function in App.~\ref{app: vio linear}, showing that the performance become worse when the nonlinearity of mixing function increase (controlled by numbers of Leaky-ReLU layers). Similarly, we test the robustness of our method when the assumption of invariant mixing $A$ and latent correlation matrix across domains in App.~\ref{app: vio vari} are violated, showing a reasonably robust performance to moderate deviations, with performance degrading gradually as the deviation increases. In App.~\ref{app: heu suff}, we further provide two possible heuristics to test Ass.~\ref{ass: sufficient variability} (sufficient domain variability) empirically from observed data.}

\subsection{Real Dataset: CamCAN}
\label{subsec: fmri}

\paragraph{Dataset.} 
We use a publicly available preprocessed resting-state fMRI dataset by \citet{monti2020interpretable}, derived from the CamCAN cohort \citep{ogawa2018large,shafto2014camcan, taylor2017cambridge}. 
A cortical parcellation based on resting-state functional connectivity analyses \citep{power2011functional} was used to define 264
distinct 10-mm diameter regions of interest (ROIs). The fMRI time courses were averaged across all voxels within each of the $d=264$ ROI.
We treat each of the $T=565$ subjects as one domain. The dataset captures brain activity at $N=261$ time points per subject. Following \citet{monti2018unified, monti2020interpretable}, we choose $n=5$ as the latent dimension.

\paragraph{Results.} 
In \cref{fig: summary of camcan}, we show the learned components for our method and for the baseline methods. We plot the estimated columns of the mixing matrix $\hat{A}$, as each column presents the coefficients of one latent component, each corresponding to a spatial pattern of brain activity.
For better visualization, we permute the components of the baseline methods to best align with the estimates of our method, potentially also switching their signs.

We further compute the sample correlation matrix of the estimated latent variables $\hat{\Zb}$ as shown on the bottom row. This shows that our method finds components some of which are strongly correlated, even with a correlation coefficient of $0.96$. In contrast, the baseline methods find almost perfectly uncorrelated components, which is not surprising given that they assume the components to be uncorrelated.

Most learned latent components (LC) reflect activity in occipital, temporal and parietal brain areas, which is compatible with the prominent visual, auditory and somatomotor components of the well-known fMRI resting-state networks \citep{fox2007spontaneous,seitzman_state_2019}. LC1 captures a combination of activity in auditory and high-level visual cortical areas while LC2 seems more specific to temporal auditory areas with contributions from frontal cortical areas; yet, LC2 remains highly correlated with LC1. LC3 displays an interplay of superior and inferior halves of the entire cerebrum, which interestingly does not correspond to a known resting-state component or component set. LC4 again focuses on visual cortical areas with a weak contribution from somatomotor areas around the central sulcus and is highly correlated with LC1. LC5 reflects joint activity fluctuations of high-order visual, auditory and possibly also somatosensory cortices and displays a similar spatial pattern as LC1 although has a lower sample correlation with it. Tentatively, we might consider the combination of LC1, LC2, and LC4 as a bigger network of correlated components, which the baseline methods were unable to find.

To further investigate the stability of the learned components, we perform a bootstrap analysis with 20 resampled datasets, refit the model on the full dataset to obtain estimates of the latent components, and report the d-MCC$=0.88\pm 0.11$ between the bootstrap estimates and the reference estimate from the full dataset. 
This shows the stability of the estimation of the mixing matrices.

%% file: pages_new/conclusion.tex
\section{Conclusion and Limitation}
\label{sec: limitation}
In this work, we study a linear mixing model with dependent latent components across multiple data domains. We introduce identifiability results that generalize ICA-style identifiability to arbitrary dependence, based on the sufficient variability of domain-specific scalings. Building on these theoretical insights, we propose MuDo-CoM, a multi-domain covariance-matching estimation algorithm. Our approach generalizes classical non-stationarity-based second-order methods \citep{pham2001blind,ablin2018beyond} by allowing correlated latent variables. We validate the proposed method on both simulated and real-world fMRI datasets.

While our assumptions are relatively mild, several nontrivial gaps between theory and practice remain. We provide a detailed discussion in App.~\ref{app: discuss}. In particular, incorporating measurement noise, extending the model to nonlinear mixing functions, and developing more statistically efficient objectives are important directions for future work.

%% file: pages_new/proof.tex
\section{Summary of Notation}
\label{app: notation}
Throughout this paper, we adopt the following conventions for mathematical notation: We use lowercase italic letters such as $x$ to denote scalars, bold lowercase letters such as $\xb$ to denote deterministic vectors, bold uppercase letters such as $\Xb$ to denote random vectors, and uppercase italic letters such as $X$ to denote matrices, and calligraphic uppercase letters such as $\Xcal$ to denote sets (e.g., a set of inputs or a domain). We list all specific
notations in the following table:
\begin{table}[ht]
\label{tab: notation}
\begin{center}
\begin{tabular}{cc}
\hline
\textbf{Symbol}  & \textbf{Description} \\ \hline
$\tau$   & Domain index \\
$\Zb$   & Latent variables \\
$\Lambda$   & Covariance matrix of the noise term\\
$\Xb^\tau$   & Observations from domain $\tau$ \\
$A$   & Linear mixing matrix \\
$D^{\tau}$   & Domain-specific scaling matrix for domain $\tau$\\
$\Sigma^\tau$   & Covariance matrix of observations in domain $\tau$\\
$G$   & Covariance matrix of latent components\\
$n$ & Dimension of the latent space \\
$d$ & Dimension of the observation space \\
$T$   & Number of domains\\
$N$   & Number of samples per domain\\
\hline
\end{tabular}

\end{center}
\end{table}

\section{Proofs}

\subsection{Justification of Assumptions}
\label{app: ass}
\asssuffscaling*

We first provide a simple derivation to explain why $m=\binom{n+3}{4}$. Then we provide a low-dimensional example below for $n=2$ to show exactly what the “degree-4 monomials” matrix looks like.

\paragraph{Why $m=\binom{n+3}{4}$?} The total number of columns equals the number of distinct monomials of total degree four, which can be counted by grouping monomials according to their exponent patterns:
\begin{align*}
\#(4)      &:\; n,\\
\#(3,1)    &:\; n(n-1),\\
\#(2,2)    &:\; \binom{n}{2},\\
\#(2,1,1)  &:\; n\binom{n-1}{2},\\
\#(1,1,1,1)&:\; \binom{n}{4}.
\end{align*}
Summing these contributions yields
\[
n + n(n-1) + \binom{n}{2} + n\binom{n-1}{2} + \binom{n}{4}
= \binom{n+3}{4},
\]
which is exactly the number of all possible homogeneous monomials of degree 4 with $n$ variables.

\begin{example}[$n=2$]
\label{exam: assum}
When $n=2$, the set of monomials of total degree four in $(d_1^\tau,d_2^\tau)$ is
\[
(d_1^\tau)^4,\quad (d_1^\tau)^3 d_2^\tau,\quad (d_1^\tau)^2 (d_2^\tau)^2,\quad
d_1^\tau (d_2^\tau)^3,\quad (d_2^\tau)^4.
\]
Hence $m=\binom{2+3}{4}=5$ domains are sufficient, and the full-rank condition in
Assumption~\ref{ass: sufficient variability} reduces to requiring that the
$5\times 5$ matrix
\[
\begin{pmatrix}
(d_1^{\tau_1})^4 & (d_1^{\tau_1})^3 d_2^{\tau_1} & (d_1^{\tau_1})^2 (d_2^{\tau_1})^2 & d_1^{\tau_1}(d_2^{\tau_1})^3 & (d_2^{\tau_1})^4 \\
(d_1^{\tau_2})^4 & (d_1^{\tau_2})^3 d_2^{\tau_2} & (d_1^{\tau_2})^2 (d_2^{\tau_2})^2 & d_1^{\tau_2}(d_2^{\tau_2})^3 & (d_2^{\tau_2})^4 \\
\vdots & \vdots & \vdots & \vdots & \vdots \\
(d_1^{\tau_5})^4 & (d_1^{\tau_5})^3 d_2^{\tau_5} & (d_1^{\tau_5})^2 (d_2^{\tau_5})^2 & d_1^{\tau_5}(d_2^{\tau_5})^3 & (d_2^{\tau_5})^4
\end{pmatrix}
\]
is invertible, with all entries $d_i^{\tau_j}\neq 0$. 
\end{example}

\subsection{Identifiability Theorems}

\label{app: proof}

\subsubsection{Proof of \cref{thm: identi}}
\label{app: proof1}
\thmidenti*

Before we prove \cref{thm: identi}, we first introduce several basic lemmas that are useful to drive the proof.

\begin{lemma}
\label{lemma: row-sparse-monomial}
Let $X \in \mathbb{R}^{n\times n}$ be full rank. If each row $X_{a\cdot}$ contains at most one nonzero entry, then $X$ is a monomial matrix.
\end{lemma}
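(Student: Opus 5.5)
The plan is to show that every column of $X$ contains at least one nonzero entry, and then use a counting argument.

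\emph{Every column is nonzero.} Since $X$ is full rank, it has no zero column. If some column $j$ were identically zero, then $Xe_j = 0$ with $e_j \neq 0$. That would contradict invertibility.

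\emph{Every row is nonzero.} Likewise, full rank rules out a zero row. Combined with the hypothesis that each row has at most one nonzero entry, each row $X_{a\cdot}$ has \emph{exactly} one nonzero entry. Call its position $\sigma(a) \in [n]$.

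\emph{The map $\sigma$ is a bijection.} This gives a map $\sigma:[n]\to[n]$. Suppose $\sigma$ were not surjective, and some column $j$ were missed. Then column $j$ would contain no nonzero entry, contradicting the first step. So $\sigma$ is surjective. Since $\sigma$ maps a finite set to itself, surjectivity implies it is a bijection. Equivalently, counting gives the same conclusion: there are $n$ nonzero entries in total, they are distributed over $n$ columns, and each column has at least one, so each column has exactly one.

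\emph{Conclusion.} Each row and each column therefore contains exactly one nonzero entry, which is the definition of a monomial matrix. Explicitly, $X = P\Lambda$ where $P$ is the permutation matrix of $\sigma$ and $\Lambda$ is an invertible diagonal matrix.

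There is no real obstacle here. The only point needing care is the step from "at most one" to "exactly one" per row, and then from per-row to per-column. Both follow from the fact that full rank excludes zero rows and zero columns, together with pigeonhole counting.
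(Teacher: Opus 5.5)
Your proof is correct and follows the paper's structure: full rank forces exactly one nonzero entry per row, which defines $\sigma$, and then $\sigma$ is shown to be a permutation. The only difference is that you obtain bijectivity from surjectivity (a column missed by $\sigma$ would be zero), whereas the paper obtains it from injectivity (two rows supported on the same column would be proportional, contradicting full rank); both routes are equally short.
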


\begin{proof}
Since $X$ is full rank, no row of $X$ can be identically zero, otherwise $\mathrm{rank}(X) \le n-1$.
Together with the assumption that each row has at most one nonzero entry, this implies that
each row has exactly one nonzero entry. Hence there exists a map $\sigma:[n]\to[n]$ and
nonzero scalars $\lambda_a \neq 0$ such that
\[
X_{a j} =
\begin{cases}
\lambda_a, & j=\sigma(a),\\
0, & \text{otherwise}.
\end{cases}
\]
Now we show that $\sigma$ must be injective. We prove this by contradiction. Suppose not, then there exist $a\neq b$ with
$\sigma(a)=\sigma(b)=j$. In that case, columns $j$ is the only column where rows $a$ and $b$
may have nonzero entries, so the two rows are linearly dependent since
\[
\lambda_b X_{a\cdot} - \lambda_a X_{b\cdot} = 0,
\]
contradicting $\mathrm{rank}(X)=n$. Therefore $\sigma$ is an injective function. And since $\sigma()$ maps domain $[n]$ to itself, $\sigma$ is a permutation of $[n]$.

Let $P$ be the permutation matrix with $P_{a,\sigma(a)}=1$, and let
$\Lambda=\mathrm{diag}(\lambda_1,\dots,\lambda_n)$ with $\lambda_a\neq 0$. Then
\[
X = \Lambda P,
\]
which is a monomial matrix.
\end{proof}

\begin{lemma}
\label{lemma: nonzero diag}
Let $\Sigma \in \mathbb{R}^{n \times n}$ be a positive definite matrix. Then all diagonal entries of $\Sigma$ are strictly positive.
\end{lemma}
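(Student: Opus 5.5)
The claim is that every diagonal entry of a positive definite matrix $\Sigma$ is strictly positive. The plan is to evaluate the quadratic form of $\Sigma$ at the standard basis vectors. By definition, positive definiteness means $\mathbf{x}^T \Sigma \mathbf{x} > 0$ for every nonzero $\mathbf{x} \in \RR^n$.

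For a fixed index $i \in [n]$, I take $\mathbf{x} = \mathbf{e}_i$, the $i$-th standard basis vector. It is nonzero, and a one-line computation gives
\[
\mathbf{e}_i^T \Sigma \mathbf{e}_i = \sum_{a,b} (\mathbf{e}_i)_a \Sigma_{ab} (\mathbf{e}_i)_b = \Sigma_{ii}.
\]
Positive definiteness then yields $\Sigma_{ii} > 0$. Since $i$ was arbitrary, this holds for all diagonal entries.

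There is no real obstacle here; the only point to check is that the definition being used is the quadratic-form one. If positive definiteness were instead given spectrally (symmetric with all eigenvalues positive), I would first recover the quadratic-form characterization. Writing $\Sigma = U \Lambda U^T$ with $U$ orthogonal, we get $\mathbf{x}^T \Sigma \mathbf{x} = \sum_k \lambda_k (U^T\mathbf{x})_k^2$. This is strictly positive for $\mathbf{x} \neq 0$ because $U^T \mathbf{x} \neq 0$ and every $\lambda_k > 0$.

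In the later proof of \cref{thm: identi}, I expect this lemma to be applied to $G$, and to congruent transforms of it such as $D^\tau G D^\tau$. There it guarantees that the diagonal terms $G_{ii}$ entering the degree-4 monomial expansions are nonzero, so they can be divided out when matching coefficients.
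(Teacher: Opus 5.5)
Your proof is correct and is exactly the paper's argument: evaluate the quadratic form at the nonzero standard basis vector $e_i$ to get $\Sigma_{ii} = e_i^T \Sigma e_i > 0$. One small note on your forward-looking remark: the paper applies the lemma to $G$ and $\hat{G}$ themselves, not to $D^\tau G D^\tau$, to justify forming the ratio $C_{pq}$ and cancelling $G_{aa}$ in \cref{lemma: X monomial}.
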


\begin{proof}
Since $\Sigma$ is positive definite, by definition, we have
\begin{equation}
x^T \Sigma x > 0 \quad \text{for all } x \in \mathbb{R}^n \setminus \{\mathbf{0}\}.
\end{equation}
Let $e_i$ be the $i$-th standard basis vector. Because $e_i \neq 0$, it follows that
\begin{equation}
e_i^T \Sigma e_i > 0.
\end{equation}
We know
\begin{equation}
e_i^T \Sigma e_i = \Sigma_{ii}.
\end{equation}
Therefore, $\Sigma_{ii} > 0$ for all $i\in[n]$, which means every diagonal entry of $\Sigma$ is non-zero.
\end{proof}

The following lemma is the essential element for the proof of \cref{thm: identi}. The intuition behind it is that if a matrix $X$ can be sandwiched between many independent diagonal scalings and still induce the same covariance transformation, i.e., the transformation from covariance $G$ to covariance $\hat{G}$, then $X$ cannot mix coordinates at all. In other words, it can only permute and rescale them.

\begin{lemma}
\label{lemma: X monomial}
Let $G, \hat{G} \in \mathbb{R}^{n\times n}$ be covariance matrices with nonzero diagonal entries, 
and let $X \in \mathbb{R}^{n\times n}$ be full-rank. Suppose we have $m$ diagonal matrices $D^{i}\in \RR^{n\times n}$ satisfy \cref{ass: sufficient variability}.

If there exists a family of diagonal matrices $\hat{D}^{i}$, $i \in [m]$, such that
\begin{equation}
\label{equ: two G}
    \hat{G} \;=\;
\big(D^{i} X (\hat{D}^{i})^{-1}\big)^{T}
\, G \,
\big(D^{i} X (\hat{D}^{i})^{-1}\big),
\qquad \forall i \in [m].
\end{equation}

Then $X$ is a monomial matrix.
\end{lemma}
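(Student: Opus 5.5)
The plan is to eliminate the unknown scalings $\hat D^{i}$ and obtain a homogeneous degree-four polynomial identity in the entries of $D^{i}$. \cref{ass: sufficient variability} will then force that identity to hold for \emph{every} diagonal $D$. Evaluating it at a coordinate-concentrated $D$ will show that each row of $X$ has at most one nonzero entry, and \cref{lemma: row-sparse-monomial} will finish.

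\textbf{Step 1 (entrywise form).} Since $(\hat D^{i})^{-1}$ appears in \eqref{equ: two G}, each $\hat D^{i}$ is invertible. Multiplying \eqref{equ: two G} on both sides by $\hat D^{i}$ gives
\begin{equation*}
\hat D^{i}\hat G\hat D^{i} = X^T D^{i} G D^{i} X .
\end{equation*}
Write $x_a$ for the $a$-th column of $X$, and define the quadratic forms in $\mathbf d=(d_1,\dots,d_n)$
\begin{equation*}
q_{ab}(\mathbf d)=x_a^T D G D x_b=\sum_{k,l}X_{ka}X_{lb}\,d_kd_l\,G_{kl},\qquad q_a:=q_{aa}.
\end{equation*}
Then $\hat d_a\hat d_b\hat G_{ab}=q_{ab}(\mathbf d^{i})$ for every $i$. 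In particular $\hat d_a^2=q_a(\mathbf d^{i})/\hat G_{aa}$, using that $\hat G_{aa}\neq 0$. Squaring the off-diagonal relation and substituting removes $\hat d$ entirely:
\begin{equation*}
\hat G_{aa}\hat G_{bb}\,q_{ab}(\mathbf d^{i})^2-\hat G_{ab}^2\,q_a(\mathbf d^{i})q_b(\mathbf d^{i})=0,\qquad \forall i\in[m],\ a\neq b .
\end{equation*}

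\textbf{Step 2 (polynomial extension).} The left-hand side is a homogeneous polynomial of degree $4$ in $\mathbf d$. Expand it in the monomial basis $\mathbf d_{(4)},\mathbf d_{(3,1)},\dots,\mathbf d_{(1,1,1,1)}$ and collect its coefficients in a vector $c$. The $m$ equations then read $Mc=0$, where $M$ is the matrix of \cref{ass: sufficient variability}. That matrix is square with $m=\binom{n+3}{4}$ rows and columns, and full row rank, so it is invertible and $c=0$. Hence the identity holds for all $\mathbf d\in\RR^n$.

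\textbf{Step 3 (strict inequality from positive definiteness).} Take $\hat G=M_1^T G M_1$ with $M_1=D^{1}X(\hat D^{1})^{-1}$. This $M_1$ is invertible because $X$ is full rank, $d^{\tau_1}_i\neq0$, and $\hat D^1$ is invertible. So $\hat G$ is positive definite, and for $a\neq b$ strict Cauchy--Schwarz gives $\hat G_{ab}^2<\hat G_{aa}\hat G_{bb}$. This strictness is the key inequality.

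\textbf{Step 4 (concentrated evaluation).} Evaluate the identity at $\mathbf d=e_k$. Then $q_a=X_{ka}^2G_{kk}$ and $q_{ab}=X_{ka}X_{kb}G_{kk}$, so
\begin{equation*}
(\hat G_{aa}\hat G_{bb}-\hat G_{ab}^2)\,G_{kk}^2X_{ka}^2X_{kb}^2=0 .
\end{equation*}
The first factor is positive by Step 3, and $G_{kk}>0$ by \cref{lemma: nonzero diag}. Therefore $X_{ka}X_{kb}=0$ for all $a\neq b$, i.e.\ every row of $X$ has at most one nonzero entry. Since $X$ is full rank, \cref{lemma: row-sparse-monomial} shows that $X$ is monomial.

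The step I expect to need the most care is Step 2. One has to check that, after eliminating $\hat d$, the relation really is a homogeneous quartic expressed in exactly the monomial basis used in \cref{ass: sufficient variability}, so that invertibility of that matrix applies. One also has to note that Step 4 evaluates at $\mathbf d=e_k$, which is outside the original domain set; this is legitimate only because the identity has been extended to all of $\RR^n$.
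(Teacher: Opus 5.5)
Your proposal is correct and is essentially the paper's proof: the paper also eliminates $\hat D^{i}$ through the ratio $C_{pq}=\hat G_{pq}^2/(\hat G_{pp}\hat G_{qq})$ and uses invertibility of the degree-four monomial matrix to force all coefficients to vanish. It then reads off the coefficient of $(d_a)^4$, namely $G_{aa}^2x_{ap}^2x_{aq}^2(1-C_{pq})$, which is exactly your evaluation at $\mathbf d=e_k$, before applying \cref{lemma: row-sparse-monomial}. Your Step 3 justifies $C_{pq}<1$ more carefully than the paper's bare appeal to Cauchy--Schwarz, but both arguments genuinely need $G$ (hence $\hat G$) to be positive definite, not merely a covariance matrix with nonzero diagonal; for singular $G$ the strict inequality, and indeed the lemma itself, can fail.
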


\begin{proof}
Let
\begin{equation}
    D^{i} = \operatorname{diag}(d_1^i,\dots,d_n^i),\qquad
\hat D^{i}=\operatorname{diag}(\hat d_1^i,\dots,\hat d_n^i).
\end{equation}

We use $x_{pq}$ to denote the $(p, q)$-entry of $X$. Expand \cref{equ: two G} entry-wise out and we get:
\begin{equation}
\label{equ: element}
    \hat{G}_{pq}=(\hat{d}_p^{i}\hat{d}_q^{i})^{-1}\sum_{r=1}^n\sum_{s=1}^n d^i_r d^i_s G_{rs}x_{rp}x_{sq}.
\end{equation}

Since by condition $\hat{G}$ has nonzero diagonal entries, in particular  $\hat{G}_{pp}\neq 0$ and $\hat{G}_{qq}\neq 0$. We consider the ratio $\frac{(\hat G_{pq})^2}{\hat G_{pp}\hat G_{qq}}$.
Substituting \eqref{equ: element} into this expression gives

\begin{equation}
    \frac{(\hat{G}_{pq})^2}{\hat{G}_{pp}\hat{G}_{qq}}
    =
    \frac{(\hat{d}_{p}^{i}\hat{d}_{q}^{i})^2}{(\hat{d}_{p}^{i})^2(\hat{d}_{q}^{i})^2}
    \frac{\left(\sum_{r,s} d^i_r d^i_s G_{rs}x_{rp}x_{sq}\right)^2}
    {\left(\sum_{r,s} d^i_r d^i_s G_{rs}x_{rp}x_{sp}\right)\left(\sum_{r,s} d^i_r d^i_s G_{rs}x_{rq}x_{sq}\right)}.
\end{equation}
Note that the first term on the right-hand side, i.e., the prefactor, is 1. Define $C_{pq}:=\frac{(\hat{G}_{pq})^2}{\hat{G}_{pp}\hat{G}_{qq}}$, which is independent of $i$. Then the equation becomes
\begin{align}
   C_{pq}=&\frac{\left(\sum_{r,s} d^i_r d^i_s G_{rs}x_{rp}x_{sq}\right)^2}
    {\left(\sum_{r,s} d^i_r d^i_s G_{rs}x_{rp}x_{sp}\right)\left(\sum_{r,s} d^i_r d^i_s G_{rs}x_{rq}x_{sq}\right)}
\end{align}
Expanding both numerator and denominator yields
\begin{align}
C_{pq}=&\frac{\sum_{r=1}^n\sum_{s=1}^n\sum_{k=1}^n\sum_{t=1}^n d^i_r d^i_s d^i_k d^i_t G_{rs}G_{kt}x_{rp}x_{sq}x_{kp}x_{tq}}{\sum_{r=1}^n\sum_{s=1}^n\sum_{k=1}^n\sum_{t=1}^n d^i_r d^i_s d^i_k d^i_t G_{rs}G_{kt}x_{rp}x_{sp}x_{kq}x_{tq}}.
\end{align}
Multiplying both sides by the denominator and rearranging terms, we obtain 
\begin{align}
    \sum_{r=1}^n\sum_{s=1}^n\sum_{k=1}^n\sum_{t=1}^n d^i_r d^i_s d^i_k d^i_t G_{rs}G_{kt}x_{rp}x_{tq}(x_{sq}x_{kp}-C_{pq}x_{sp}x_{kq})=0.
    \label{equ: long}
\end{align}

We rewrite it in a form as a linear combination of degree-4 monomials in the scaling vector $(d_1^i,\dots,d_n^i)$. In other words, combine the terms having the same $d^i_r d^i_s d^i_k d^i_t$. Then we get

\begin{align}
\label{eq:block-monomial}
0
&=
\sum_{a=1}^n \beta_{a}^{(4)} (d_a^i)^4
\;+\;
\sum_{\substack{a,b=1\\ a\neq b}}^n \beta_{ab}^{(3,1)} (d_a^i)^3 d_b^i
\;+\;
\sum_{1\le a<b\le n} \beta_{ab}^{(2,2)} (d_a^i)^2 (d_b^i)^2
\nonumber\\[4pt]
&\quad+\;
\sum_{\substack{a,b,c=1\\ \text{all distinct}}}^n
\beta_{abc}^{(2,1,1)} (d_a^i)^2 d_b^i d_c^i
\;+\;
\sum_{1\le a<b<c<d\le n}
\beta_{abcd}^{(1,1,1,1)} d_a^i d_b^i d_c^i d_d^i ,
\end{align}
where the $\beta^{(\cdot)}$ are the coefficients that depend on the $G$, $X$, and the constants $C_{pq}$. Each coefficient is obtained by collecting all terms in \eqref{equ: long} that correspond to the same degree-4 monomial in $(d_1^i,\dots,d_n^i$).

We rewrite \cref{eq:block-monomial} in matrix form by collecting all degree-4 monomials
of $(d_1^i,\dots,d_n^i)$ into a feature vector
\[
\phi(d^i)
=
\bigl[
\mathbf{d}^{i}_{(4)},
\mathbf{d}^{i}_{(3,1)},
\mathbf{d}^{i}_{(2,2)},
\mathbf{d}^{i}_{(2,1,1)},
\mathbf{d}^{i}_{(1,1,1,1)}
\bigr]^T,
\]
and stacking the corresponding coefficients into a vector
\(
\betab
=
\bigl[
\beta^{(4)},
\beta^{(3,1)},
\beta^{(2,2)},
\beta^{(2,1,1)},
\beta^{(1,1,1,1)}
\bigr]^T.
\)
This leads to the following form of vector production: 
\begin{equation}
\label{eq:block-matrix}
\phi(d^i)^T \betab = 0,
\qquad \forall i \in [m].
\end{equation}

Stacking these equations across all $m$ domains gives the linear system
\[
\Phi \betab = \mathbf{0},
\]
where $\Phi \in \mathbb{R}^{m \times \binom{n+3}{4}}$ is the matrix whose $i$-th row is
$\phi(d^i)^T$. By Assumption~\ref{ass: sufficient variability}, the matrix $\Phi$
is full rank. It follows that the only solution is $\betab = \mathbf{0}$.

Consider the first $n$ terms in $\betab$, i.e. coefficient $\beta_a^{(4)}$, $a=1,..., n$, it is the case when $r=s=k=t=a$ in \cref{equ: long}. Thus we have

\begin{align}
     G_{aa}^2x_{ap}^2x_{aq}^2(1-C_{pq})=0\,, \forall a\in [n].
    \label{equ: four_power}
\end{align}
Since we know $C_{pq}:=\frac{(\hat{G}_{pq})^2}{\hat{G}_{pp}\hat{G}_{qq}}$ and $\hat{G}$ is a covariance matrix, by Cauchy–Schwarz inequality, we have $C_{pq}<1$ when $p\neq q$. Thus, we can derive 
\begin{align}
     G_{aa}^2x_{ap}^2x_{aq}^2=0\,, \qquad\forall a\in [n]\,, \qquad\forall p\neq q.
    \label{equ: four_power1}
\end{align}

As $G_{aa}\neq 0$, we can further simplify the equation and get
\begin{align}
     x_{ap}x_{aq}=0\,, \qquad \forall a\in [n]\,,\qquad \forall p\neq q.
    \label{equ: four_power2}
\end{align}
We first fix $a\in[n]$, \cref{equ: four_power2} implies that each row $X_{a\cdot}$ has at most one non-zero element. Since $X$ is full-rank, by \cref{lemma: row-sparse-monomial}, we can conclude that $X$ is monomial.

\end{proof}

Now we are ready to prove \cref{thm: identi}.
\begin{proof}
Suppose there is another set of latent variables $\hat{\Zb}$ can generate the same $\Xb^{\tau}$ for all $\tau\in[T]$, following the same data-generating process as described in \cref{problem setting} with a new set of paramaters $\{\hat{A}\,,\hat{D}^{\tau}\,,\hat{G}\}$, where $\hat{D}^{\tau}$ are domain-specific scaling matrix with nonzero entries, $\hat{G}$ are definite positive covariance matrix of $\hat{\Zb}$, $\hat{A}$ are full-column-rank mixing matrix, then we have,
\begin{align}
\label{equ: same X}
   &AD^\tau \Zb= \Xb^{\tau} = \hat{A}\hat{D^\tau} \hat{\Zb},\, \forall \tau\in[T],
\end{align}
where we slightly abuse notation by using 
$=$ to denote equality in distribution.

To prove $A$ is recovered up to right multiplication by a signed monomial matrix, i.e., there exists a monomial matrix $M\in\RR^{n\times n}$, such that
\begin{align}
    \hat{A}&=AM\\
    M^{-1}&=(\hat{A}^T\hat{A})^{-1}\hat{A}^TA.
\end{align}

Since $M^{-1}$ is a monomial matrix as well, it is equivalent to show $(\hat{A}^T\hat{A})^{-1}\hat{A}^TA$ is a monomial matrix. We aim to prove this in the next step.

Based on \cref{equ: same X}, take covariance  on both sides and then we have  
\begin{equation}
\label{equ: cov of X}
    Cov(\Xb^{\tau}) = AD^{\tau}G D^{\tau}A^T=\hat{A}\hat{D}^{\tau}\hat{G}\hat{D}^{\tau}\hat{A}^T.
\end{equation} 

Since all entries of the diagonal matrix $\hat{D}^{\tau}$ are nonzero, all columns are linearly independent of each other. This implies $\hat{D}^{\tau}$ is invertible. Rearrange \cref{equ: cov of X} then we have

\begin{align}
\hat{G}&=(\hat{D}^{\tau})^{-1}(\hat{A}^T\hat{A})^{-1}\hat{A}^TAD^{\tau}GD^{\tau}A^T\hat{A}(\hat{A}^T\hat{A})^{-1}(\hat{D}^{\tau})^{-1},\\
\label{equ:ADG}
\hat{G}&=\left(D^{\tau}A^T\hat{A}(\hat{A}^T\hat{A})^{-1}(\hat{D}^{\tau})^{-1}\right)^T G\left(D^{\tau}A^T\hat{A}(\hat{A}^T\hat{A})^{-1}(\hat{D}^{\tau})^{-1}\right).
\end{align}

Since both $G$ and $\hat{G}$ are definite positive matrices, by \cref{lemma: nonzero diag}, we know all diagonal entries of them are nonzero. 
Besides, we know $A^T\hat{A}(\hat{A}^T\hat{A})^{-1}$ must be full-rank matrix since all $\hat{G}$, $G$, $D^\tau$ and $\hat{D}^\tau$ are full-rank in \cref{equ:ADG}. All conditions in \cref{lemma: X monomial} got satisfied, and we can apply it to \cref{equ:ADG}. Therefore, we conclude that $A^T\hat{A}(\hat{A}^T\hat{A})^{-1}$ is a monomial matrix. The proof is completed.

\end{proof}


\subsubsection{Proof of \cref{thm: same sign D}}
\label{app: proof2}
\thmidentiD*

\begin{proof}
To prove $AD^\tau$ is recovered up to right multiplication by a monomial matrix with a domain-specific sign matrix, i.e., there exists a monomial matrix $M\in\RR^{n\times n}$ and sign matrix $S^\tau$ with diagonal entries from $\{-1,1\}$, such that
\begin{align}
    \hat{A}\hat{D^\tau}&=AD^\tau S^\tau M\\
    M^{-1}(S^\tau)^{-1}&=(\hat{D}^{\tau})^{-1}(\hat{A}^T\hat{A})^{-1}\hat{A}^TAD^\tau.
\end{align}

Since $M^{-1}$ is a monomial matrix and $(S^\tau)^{-1}$ is a sign matrix still, it is equivalent to show $(\hat{D}^{\tau})^{-1}(\hat{A}^T\hat{A})^{-1}\hat{A}^TAD^\tau$ is a monomial matrix times a domain-specific sign matrix. We aim to prove this in the next step.

From \cref{thm: identi}, we know that $(\hat{A}^T\hat{A})^{-1}\hat{A}^TA$ is a monomial matrix, which means there exists a permutation matrix $P$ and diagonal matrix $E$ such that 
\begin{align}
(\hat{A}^T\hat{A})^{-1}\hat{A}^TA&=PE\\
\Rightarrow \qquad (\hat{D}^{\tau})^{-1}(\hat{A}^T\hat{A})^{-1}\hat{A}^TAD^\tau&=(\hat{D}^{\tau})^{-1}PED^\tau=PP^T(\hat{D}^{\tau})^{-1}PED^\tau.
\end{align}
Define $E^\tau:=P^T(\hat{D}^{\tau})^{-1}PED^\tau$, which is a diagonal matrix related to domain index $\tau$. Then we have 
\begin{align}
\label{equ: PEtau}
(\hat{D}^{\tau})^{-1}(\hat{A}^T\hat{A})^{-1}\hat{A}^TAD^\tau=PE^\tau.
\end{align}

Repeat the same derivation as in \cref{thm: identi} and we can get  \cref{equ:ADG} as well. Substitute \cref{equ: PEtau} into it then we have
\begin{align}
\label{equ: GPE}
\hat{G} = PE^{\tau}G(PE^{\tau})^T
\end{align}

Denote the permutation induced by the permutation matrix $P$ as $\pi: [n]\rightarrow [n]$. Considering the diagonal entries on both sides of \cref{equ: GPE}, we get 
\begin{align}
    &\hat{G}_{\pi(i)\pi(i)}=(E^{\tau}_i )^2G_{ii}\\
     \Rightarrow \, & E^{\tau}_i= \pm \sqrt{\hat{G}_{\pi(i)\pi(i)}G^{-1}_{ii}}.
     \label{equ: pm E}
\end{align}
Define a diagonal matrix $L=diag\left(\sqrt{\hat{G}_{\pi(1)\pi(1)}G^{-1}_{11}},\sqrt{\hat{G}_{\pi(2)\pi(2)}G^{-1}_{22}},..., \sqrt{\hat{G}_{\pi(n)\pi(n)}G^{-1}_{nn}}\right)$. Then, by \cref{equ: pm E}, we have 
\begin{equation}
\label{equ: Etau}
    E^\tau=LS^\tau, \qquad \forall \tau\in[T],
\end{equation}
where $S^\tau$ is a domain-specific sign matrix. Substitute \cref{equ: Etau} into \cref{equ: PEtau}, we finally get 
\begin{align}
\label{equ: PE}
(\hat{D}^{\tau})^{-1}(\hat{A}^T\hat{A})^{-1}\hat{A}^TAD^\tau=PLS^\tau, \qquad \forall \tau\in[T].
\end{align}
The proof of part (i) is completed.

For part (ii), we additionally assume all $D^{\tau}$ are sharing the same sign, as well as for $\hat{D}^\tau$, which means for all $i\in[n]$, $E^{\tau}_i$ must be the same sign across $\tau\in[T]$ in \cref{equ: PEtau}. This implies that only one value in \cref{equ: pm E} can be taken for each $i\in[n]$. Thus, we can find a sign matrix $S$, such that 
\begin{align}
    S:=S^{\tau}\,,\forall \tau\in[T].
\end{align}
Consequently, 
\begin{align}
\label{equ: PE2}
(\hat{D}^{\tau})^{-1}(\hat{A}^T\hat{A})^{-1}\hat{A}^TAD^\tau=PLS, \qquad \forall \tau\in[T].
\end{align}
The proof of part (ii) is completed.
\end{proof}

\subsubsection{Proof of \cref{thm: covmatch}}
\label{app: proof3}
\thmcovmatch*
\begin{proof}

Denote the loss function in \cref{loss} as $\widehat{\mathcal L}_N$, where $N$ is the sample number per domain (rows number of $X^\tau$). Let $(\hat{A}_N,\{\hat{D}_N^\tau\}_{\tau=1}^T,\hat{G}_N)$ be the global minimizer of $\widehat{\mathcal L}_N$, and define the fitted matrices as  
\begin{equation}
    \hat\Sigma_N^\tau:=\hat A_N \hat D_N^\tau \hat G_N \hat D_N^\tau \hat A_N^T.
\end{equation}

Let $\Sigma^\tau:=\mathrm{Cov}(\Xb^\tau)$, i.e., the ground truth covariance matrix of observation from domain $\tau$. Since $\Xb^\tau$ is generated via the process discribed in \cref{problem setting}, we have 
\begin{equation}
    \Sigma^\tau = AD^\tau G D^\tau A^T.
\end{equation}

Since $(\hat{A}_N,\{\hat{D}_N^\tau\}_{\tau=1}^T,\hat{G}_N)$ minimizes $\widehat{\mathcal L}_N$, 
\begin{equation}
    0\leq\widehat{\mathcal L}_N(\hat A_N,\{\hat D_N^\tau\}_{\tau=1}^T,\hat G_N)
    \le
    \widehat{\mathcal L}_N(A,\{D^\tau\}_{\tau=1}^T,G)
    =
    \sum_{\tau=1}^T
    \left\|\widehat{\mathrm{Cov}}(X^\tau)-\Sigma^\tau\right\|_F^2.
\end{equation}
Since $\widehat{\mathrm{Cov}}(X^\tau)\xrightarrow{P}\Sigma^\tau$ for each
$\tau$ and $T$ is fixed, the right-hand side converges to zero in probability.
Hence,
\begin{equation}
    \widehat{\mathcal L}(\hat A_N,\{\hat D_N^\tau\},\hat G_N)\xrightarrow{P} 0.
    \label{equ:wideL0}
\end{equation}
By definition of the loss function, we have
\begin{equation}
    \widehat{\mathcal L}(\hat A_N,\{\hat D_N^\tau\},\hat G_N)=\sum_{\tau=1}^T\left\|\widehat{\mathrm{Cov}}(X^\tau)-\hat\Sigma_N^\tau\right\|_F^2.
\end{equation}
Connecting it with \cref{equ:wideL0}, we get \begin{equation}
    \sum_{\tau=1}^T\left\|\widehat{\mathrm{Cov}}(X^\tau)-\hat\Sigma_N^\tau\right\|_F^2\xrightarrow{P} 0.
\end{equation}
Due to the nonnegativity of each term, we can further derive 
\begin{equation}
    \left\|\widehat{\mathrm{Cov}}(X^\tau)-\hat\Sigma_N^\tau\right\|_F^2\xrightarrow{P} 0,\qquad \forall \tau\in[T].
    \label{equ: term1}
\end{equation}

Finally, we aim to show that $\hat\Sigma_N^\tau$ is a consistent estimator of $\Sigma^\tau$. By the triangular inequality,
\begin{equation}
    0\leq \|\hat\Sigma_N^\tau-\Sigma^\tau\|_F
    \le
    \|\hat\Sigma_N^\tau-\widehat{\mathrm{Cov}}(X^\tau)\|_F
    +
    \|\widehat{\mathrm{Cov}}(X^\tau)-\Sigma^\tau\|_F.
\end{equation}
The first term converges to 0 in probability by \cref{equ: term1} and the second term converges to 0 in probability by condition. Thus,
\begin{equation}
    \hat A_N \hat D_N^\tau \hat G_N \hat D_N^\tau \hat A_N^T=\hat\Sigma_N^\tau \xrightarrow{P} \Sigma^\tau, \qquad \forall \tau\in[T].
\end{equation}

Assume we have infinite samples, i.e., $N\rightarrow \infty$, then after converge,  $\hat{A}_N\hat{D}_N^\tau \hat{G}_N \hat{D}_N^\tau \hat{A}_N^T$ will equal to $\Sigma^\tau$. The following proof is basically the same as that of \cref{thm: same sign D} (\cref{app: proof2}), which leads to the conclusion that the recovered representations are equal to the true latents up to
the identifiability class (permutation, scaling, and domain-specific sign). Under the
additional sign-consistency constraint on $D^\tau$ (and the corresponding restriction on
$\hat D^\tau$), domain-specific sign flips are ruled out and the ambiguity reduces to the
classical permutation-and-scaling indeterminacy.
\end{proof}

\section{Intrinsic Structure of Latent Components $\Zb$}
\label{app: intrinsic}
In this section, we discuss potential prior knowledge about the intrinsic structure of the latent components $\Zb$, which can help reduce the number of domains required for identifiability in the theorem.

\paragraph{Uncorrelated latent components.}
A particularly important special case is when the latent components are uncorrelated, i.e., the covariance matrix $G=\mathrm{Cov}(\Zb)$ is diagonal. This setting corresponds to the classical assumptions of independent component analysis (ICA) and related blind source separation methods, e.g, Joint Diagonization \citep{belouchrani2002blind, pham2001blind}. In this case, the covariance matrices across domains take the form
\[
\mathrm{Cov}(X^\tau)=A D^\tau G D^\tau A^\top,
\]
where $G$ is diagonal and $D^\tau$ is domain-specific and diagonal.

Under this structure, the matrices $\{\mathrm{Cov}(X^\tau)\}_{\tau=1}^T$ are jointly diagonalizable by the same linear transformation, up to permutation and scaling. As a result, identifiability of the mixing matrix $A$ can be achieved through classical joint diagonalization techniques, and the required number of domains is $n$, which is significantly smaller than in the fully dependent case. 

From the perspective of our framework, joint diagonalization is a special case in which the latent covariance matrix has no off-diagonal entries. Incorporating this structural assumption into our analysis causes all mixed-coefficient terms $\beta^{(3,1)},
\beta^{(2,2)},
\beta^{(2,1,1)}$, and $\beta^{(1,1,1,1)}$ vanish to zero, so that \cref{eq:block-monomial} reduces to a sum of $n$ terms only:
\begin{align}
0=\sum_{a=1}^n \beta_{a}^{(4)} (d_a^i)^4.
\label{equ:block-monomial-JD}
\end{align}
Therefore, only $n$ domains are required to ensure that \cref{equ:block-monomial-JD} has only one solution (equal to zero), which significantly reduces the domain numbers compared to the dependent case. This result is also consistent with the Joint Diagonization result.

%% file: pages_new/app_licences.tex
\section{Implementation Details and Licences}
\label{app: implementation}

This section provides further details about the experiment implementation in \cref{sec: experiments}. The implementation is built on the open-source code by \citet{zheng2018dags}, released under the Apache 2.0 License, which we used to generate latent samples via the structural model. We list all the hyperparameters we used for training in \cref{tab:train_hyper}, and provide all the code for our method, and the experiments at this \href{https://github.com/danrux/MuDo-CoM}{[GitHub repository]}.

\begin{table}[h]
\caption{Parameters for experiments results in Sec.~\ref{sec: experiments} and App.~\ref{app: experiments}.}
\label{tab:train_hyper}
\centering
\begin{tabular}{lll}
\hline
           & Numerical & CamCAN   \\ 
\hline
Optimizer  & Adam      & Adam  \\
Learning rate   & 5e-3    & 5e-3  \\
\# Seeds   & 5      & 5 or 20     \\
\# Iterations   & 50000     & 10000       \\
\# Initialization   & 3    & 20  \\
\hline
\end{tabular}

\end{table}

\subsection{(Non)-Convexity of the Loss Function}
\label{app: nonconvex}
Our loss function is not convex with respect to $\hat{A}$, $\hat{D^\tau}$, and $\hat{\Lambda}$ and $\hat{B}$, so it can in principle get stuck in local minima. We provide a simple example to show the nonconvexity:
\begin{example}
 Let $dim(X)=dim(Z)=1$, domain number $T=1$, observed covariance $Cov(X):=C>0$.  In this case $loss=(C-(AD\Lambda)^2)^2$. Note that here $C, A, D, \Lambda$ are all scalars. If we fix nonzero $D, \Lambda$, then we can regard the loss as a function of $A$, $Loss(A)=(D\Lambda)^4A^4-2C(D\Lambda)^2A^2+C^2$. Taking the second derivative at $A=0$, we get $-4C(D\Lambda)^2<0$. Thus the loss is not convex. 
 \end{example}
This matches our choice to use multiple random initializations to escape bad local minima (3 reinitializations used in \cref{tab: numerical}), and keep the best run (lowest loss).

\subsection{Analysis of the Stability of Source Extraction}
\label{app: stab AAT}
From a theoretical point of view, to ensure the identifiability of latent, $AA^T$
 must be invertible. The reason is that in identifiability theory, a common and often necessary assumption is an injective mixing function, i.e., a full column-rank $A$, because if two distinct latent values $\zb_1$
 and $\zb_2$
 get mapped to the same observation $\xb$
, then generally it is impossible to distinguish them from $\xb$
 only. In our setup, this is equivalent to a full column-rank $A$, which implies a full column-rank $\hat{A}$ due to perfect covariance matching. Then, the stability of source extraction has already been sufficiently tested by computing MCC, which needs to use $(\hat{A}\hat{A}^T)^{-1}$
 to compute $\hat{Z}$, in Tab2.

%% file: pages_new/app_experiments.tex
\section{Additional Experimental Results}
\label{app: experiments}
\subsection{Metrics}
\label{app: metrics}

\paragraph{Evaluation Metrics.} 
We use three metrics to assess the performance of our method in recovering the latent causal variables.

\textbf{i) Mean Correlation Coefficient (MCC).}  
Following previous work \citep{hyvarinen2019nonlinear, khemakhem2020variational}, we report the \emph{Mean Correlation Coefficient} (MCC) to evaluate how well the learned representations match the ground truth up to a permutation and scaling (\cref{thm: same sign D}.(ii)).  
Let $\hat{\Zb} \in \mathbb{R}^{n}$ denote the learned latent representations and $\Zb \in \mathbb{R}^{n}$ the ground truth latent variables.  
First, we compute the Pearson correlation matrix $\mathrm{Corr}$ between the learned and true variables. Since the learned variables are identifiable only up to permutation, we compute the MCC using the permutation $\pi \in \mathrm{perm}([n])$ that maximizes the average absolute correlation:
\[
\mathrm{MCC} = \frac{1}{n} \max_{\pi \in \mathrm{perm}([n])} \sum_{i=1}^{n} \big|\mathrm{Corr}_{i, \pi(i)}\big|.
\]
The correlation matrix under the optimal permutation is denoted as $\mathrm{Corr}_{\pi}^{n \times n}$, calculated with data from all domains stacked together. This metric ranges from 0 to 1, with 1 indicating perfect recovery.

\textbf{ii) Domain-wise MCC (d-MCC).}  
To assess performance in a multi-domain setting, we extend MCC to a domain-wise metric, \emph{d-MCC}.  
Suppose we have $T$ domains, indexed by $\tau = 1, \dots, T$, each with its own set of samples $\Xb^\tau$ and latent variables $\Zb^\tau$.  
For each domain, we compute the MCC as described above:
\[
\mathrm{MCC}^\tau = \frac{1}{n} \max_{\pi \in \mathrm{perm}([n])} \sum_{i=1}^{n} \big|\mathrm{Corr}_{i, \pi(i)}^\tau\big|.
\]
The d-MCC is then obtained by averaging over all domains:
\[
\mathrm{d\text{-}MCC} = \frac{1}{T} \sum_{\tau=1}^{T} \mathrm{MCC}^\tau.
\]
This metric evaluates the recovery of latent variables up to permutation and scaling for each domain.

\textbf{iii) Amari Distance.} 
We also report the \emph{Amari distance} \citep{amari1995new}, a standard metric in Independent Component Analysis (ICA) algorithms and blind source separation, which measures the deviation of an estimated unmixing matrix $\hat{A}^{-1}$ from the ground truth mixing matrix $A^{-1}$ up to permutation and scaling.  
Let $P$ and $L$ denote the optimal permutation and scaling matrices. The Amari error is defined as:
\[
\mathrm{Amari}(\hat{A}, A) 
= \frac{1}{2n} \Bigg[ \sum_{i=1}^{n} \frac{\sum_{j=1}^{n} |G_{ij}|}{\max_j |G_{ij}|} - 1
+ \sum_{j=1}^{n} \frac{\sum_{i=1}^{n} |G_{ij}|}{\max_i |G_{ij}|} - 1 \Bigg], 
\quad G = LP \hat{A}^{-1} A,
\]
where we use persudo-inverse of $\hat{A}$ if $d>n$. Lower values indicate better recovery, with $0$ corresponding to perfect recovery.  
We use the standard Amari distance based on \emph{absolute values}. We notice that several recent implementations instead use a squared variant of this metric. While related, the two are not numerically comparable due to the different penalization of off-diagonal terms.

\subsection{Adaptation of Baselines to Multi-Domain Datasets}
\label{app: baselines}
\new{
First of all, QNDIAG, PARAFAC-COV, MHA and iVAE can be directly used in multi-domain settings (details below). For ICA, we concatenate multi-domain data as a single dataset with a mixture model, which can ensure the non-Gaussianity is satisfied for ICA.}

\new{In particular, QNDIAG is a joint diagonalization method that aims to find a shared transformation to map a set of covariance matrices from multiple domains to a set of diagonal matrices. PARAFAC-COV also requires multiple tensors (one for each domain) to achieve a unique decomposition. MHA focuses on various covariance matrices that change across domains. For iVAE, we use domain labels as the auxiliary variable $U$, and within each domain, compute the covariance and mean of the post-distribution.}

\paragraph{Why PARAFAC only for diagonal case?}
Suppose $C^{\tau}$ is the covariance matrix to decompose. If $G$ is diagonal, then the matrices $C^{\tau} = A^T D^{\tau} G D^{\tau} A$ have a simple symmetric PARAFAC/CP structure. Since each $D^{\tau}$ is diagonal,
\[C^{\tau} = \sum_{r=1}^n g_r d_{{\tau}r}^2\, a_r a_r^T,
\]
where $a_r$ denotes the $r$-th latent factor from $A$. Stacking the matrices $C^{\tau}$ into a 3-way tensor $\mathcal C(:,:,i)=C^{\tau}$ gives the symmetric CP decomposition
\[
\mathcal C = \sum_{r=1}^n a_r \circ a_r \circ h_r,
\]
with third-mode coefficients$h_r({\tau})=g_r d_{{\tau}r}^2$.
Thus the first two modes share the same factors $A$, and only the weights vary with ${\tau}$.

If $G$ is non-diagonal, the decomposition is no longer standard PARAFAC. Expanding the product gives
\[
C^{\tau} = \sum_{r,s} g_{rs}\, d_{{\tau}r}d_{{\tau}s}\, a_r a_s^T,
\]
which introduces cross-interactions between latent components $r$ and $s$, thus the terms are coupled through the matrix $G$. PARAFAC is recovered only in the special case where $G$ is diagonal (or has special low-rank structure).

\subsection{Numerical Experiments}
\label{app: num experiments}

Following the same setting as in \cref{tab: numerical}, we report more results with other metrics: MCC in \cref{tab: numerical_mcc} and Amari distance \cref{tab: numerical_amari}. 

Different from d-MCC used in \cref{tab: numerical} which can access the recovery of domain-invariant mixing matrix $A$, MCC instead can also measure the recovery of domain-specific scaling matrix $D^\tau$ by reflecting whether $(\hat{D}^\tau)^{-1}D^\tau$ are the same for all $\tau\in[T]$. In \cref{tab: numerical_mcc}, since all other baselines does not specifically estimate the domain-specific scaling, the performance is worse than our method for all settings. Comparing \cref{tab: numerical} and \cref{tab: numerical_mcc}, we can find that for most of the configurations, our method achieves (near-)perfect recovery according to both MCC and d-MCC. For the less-dependent case ($k=0, 1$), MCC shows slightly lower values, which means the model recovers well inside each domain but slightly worse across domains. For the low-sample-number case (n=200), MCC is relatively lower than others, which empirically shows that a good recovery for all domains highly relies on the recovery of single domains.

\begin{table}[ht]
\caption{Results for the numerical experiments: Ave. MCC $\pm$ Std over 5 random seeds. The bold font indicates which parameters vary across each block of rows, as well as the best performance method.}
\label{tab: numerical_mcc}
\begin{center}
\begin{small}
\begin{sc}
\footnotesize
\begin{tabular}{
@{\hskip 2pt}c@{\hskip 2pt} 
@{\hskip 2pt}c@{\hskip 2pt} 
@{\hskip 2pt}c@{\hskip 2pt} 
c 
@{\hskip 2pt}c@{\hskip 2pt} 
ccccccc}
\hline
$n$ & $d$ & $T$ &  $N$ &  $k$ & SCMs         & MuDo-CoM             & FastICA        & QNDIAG        & PARAFAC        & iVAE        & MHA \\ \hline
\textbf{5}  & $n$ & 2$n$ &  5000 & 2 & Gauss & \textbf{0.999$\pm$.000} & 0.767$\pm$.053 & 0.800$\pm$.043& 0.611$\pm$.193& 0.715$\pm$.088& 0.728$\pm$.046\\
\textbf{10} & $n$ & 2$n$ & 5000 & 2 & Gauss & \textbf{0.999$\pm$.001} & 0.740$\pm$.033 & 0.788$\pm$.029 & 0.433$\pm$.219 & 0.634$\pm$.072& 0.687$\pm$.045\\
\textbf{20} & $n$ & 2$n$ & 5000 & 2 & Gauss  & \textbf{0.985$\pm$.021} & 0.732$\pm$.021 & 0.794$\pm$.016& 0.274$\pm$.072& 0.606$\pm$.083& 0.581$\pm$.019\\
\hline
10 & \textbf{1}$n$ & 2$n$ & 5000 & 2 & Gauss  & \textbf{0.999$\pm$.001} & 0.740$\pm$.033 & 0.788$\pm$.029& 0.433$\pm$.219& 0.646$\pm$.042& 0.687$\pm$.045\\
10 & \textbf{2}$n$ & 2$n$ & 5000 & 2 & Gauss  & \textbf{0.998$\pm$.001} & 0.741$\pm$.034 & 0.788$\pm$.029& 0.443$\pm$.174& 0.634$\pm$.072& 0.653$\pm$.034\\
10 & \textbf{4}$n$ & 2$n$ & 5000 & 2 & Gauss & \textbf{0.980$\pm$.037} & 0.741$\pm$.034 & 0.788$\pm$.029& 0.382$\pm$.231& 0.716$\pm$.034& 0.632$\pm$.034\\
10 & \textbf{10}$n$ & 2$n$ & 5000 & 2 & Gauss & \textbf{0.999$\pm$.000} & 0.741$\pm$.034 & 0.788$\pm$.029& 0.375$\pm$.209& 0.710$\pm$.063& 0.614$\pm$.051\\
\hline
10 & $n$ & \textbf{1}$n$ & 5000 & 2 & Gauss  & \textbf{0.998$\pm$.001} & 0.763$\pm$.034 & 0.801$\pm$.021 & 0.641$\pm$.161& 0.726$\pm$.050& 0.646$\pm$.011\\
10 & $n$ & \textbf{2}$n$ & 5000 & 2 & Gauss & \textbf{0.999$\pm$.001} & 0.740$\pm$.033  & 0.788$\pm$.029& 0.433$\pm$.219& 0.634$\pm$.072& 0.687$\pm$.045\\
10 & $n$ & \textbf{3}$n$ & 5000 & 2 & Gauss & \textbf{0.991$\pm$.017} & 0.788$\pm$.035 & 0.827$\pm$.024& 0.413$\pm$.204& 0.706$\pm$.039& 0.633$\pm$.010\\
\hline
10 & $n$ & 2$n$ & \textbf{200} & 2 & Gauss  & \textbf{0.795$\pm$.159} & 0.739$\pm$.032 & 0.788$\pm$.028& 0.455$\pm$.207& 0.577$\pm$.044& 0.686$\pm$.046\\
10 & $n$ & 2$n$ & \textbf{1000} & 2 & Gauss & \textbf{0.978$\pm$.030} & 0.736$\pm$.036 & 0.788$\pm$.029& 0.512$\pm$.170& 0.543$\pm$.049& 0.687$\pm$.046\\
10 & $n$ & 2$n$ & \textbf{5000} & 2 & Gauss & \textbf{0.999$\pm$.001} & 0.740$\pm$.033  & 0.788$\pm$.029& 0.433$\pm$.219& 0.634$\pm$.072& 0.687$\pm$.045\\
10 & $n$ & 2$n$ & \textbf{10000} & 2 & Gauss& \textbf{0.999$\pm$.001} & 0.741$\pm$.033 & 0.789$\pm$.029& 0.456$\pm$.214& 0.651$\pm$.086& 0.687$\pm$.045\\
\hline
10 & $n$ & 2$n$ & 5000 & \textbf{0}  & Gauss  & \textbf{0.983$\pm$.015} & 0.937$\pm$.003 & 0.937$\pm$.003 & 0.897$\pm$.077& 0.880$\pm$.035& 0.500$\pm$.028\\
10 & $n$ & 2$n$ & 5000 & \textbf{1}  & Gauss  & \textbf{0.984$\pm$.029} & 0.820$\pm$.013 & 0.849$\pm$.010& 0.535$\pm$.138& 0.709$\pm$.068& 0.607$\pm$.035\\
10 & $n$ & 2$n$ & 5000 & \textbf{2}  & Gauss  & \textbf{0.999$\pm$.001} & 0.740$\pm$.033  & 0.788$\pm$.029& 0.433$\pm$.219& 0.634$\pm$.072& 0.687$\pm$.045\\
10 & $n$ & 2$n$ & 5000 & \textbf{3}  & Gauss  & \textbf{0.999$\pm$.001} & 0.678$\pm$.030 & 0.757$\pm$.010& 0.364$\pm$.104& 0.596$\pm$.052& 0.733$\pm$.017\\
\hline
10 & $n$ & 2$n$ & 5000 & 2 & \textbf{Gauss}   & \textbf{0.999$\pm$.001} & 0.740$\pm$.033 & 0.788$\pm$.029& 0.433$\pm$.219& 0.634$\pm$.072& 0.687$\pm$.045\\
10 & $n$ & 2$n$ & 5000 & 2 & \textbf{Exp}     & \textbf{0.990$\pm$.016} & 0.723$\pm$.036 & 0.789$\pm$.029& 0.443$\pm$.174& 0.613$\pm$.060& 0.687$\pm$.045\\
10 & $n$ & 2$n$ & 5000 & 2 & \textbf{Gumbel} & \textbf{0.982$\pm$.034} & 0.715$\pm$.047 & 0.762$\pm$.042& 0.425$\pm$.234& 0.632$\pm$.072& 0.676$\pm$.045\\
10 & $n$ & 2$n$ & 5000 & 2 & \textbf{Nonlinear} & \textbf{0.999$\pm$.001} & 0.825$\pm$.037 & 0.910$\pm$.009& 0.451$\pm$.164& 0.852$\pm$.022& 0.528$\pm$.026\\

\hline
\end{tabular}
\end{sc}
\end{small}
\end{center}
\end{table}

\FloatBarrier

Both the Amari distance and d-MCC are used to assess the recovery performance of $A$, however, from different perspectives. Amari distance is directly computed on the composition of ground truth $A$ and estimator $\hat{A}$, whereas MCC is computed on the ground truth latent variables $\Zb$ and estimator $\hat{\Zb}$. As we can see, the Amari distance is more sensitive to the accuracy of recovery, even in cases where d-MCCs are all $\geq 0.99$, we can still distinguish the difference measured by the Amari distance. The measurements from Amari distance and MCC are consistent with each other.

\begin{table}[ht]
\caption{Results for the numerical experiments: Ave. Amari $\pm$ Std over 5 random seeds. The bold font indicates which parameters vary across each block of rows, as well as the best performance method.}
\label{tab: numerical_amari}
\begin{center}
\begin{small}
\begin{sc}
\footnotesize
\begin{tabular}{@{\hskip 2pt}c@{\hskip 2pt} 
@{\hskip 2pt}c@{\hskip 2pt} 
@{\hskip 2pt}c@{\hskip 2pt} 
c 
@{\hskip 2pt}c@{\hskip 2pt} 
cccccc}
\hline
$n$ & $d$ & $T$ &  $N$ &  $k$ & SCMs        & Mudo-CoM            & FastICA         & QNDIAG        & PARAFAC      & MHA \\ \hline
\textbf{5}  & $n$ & 2$n$ &  5000 & 2 & Gauss & \textbf{0.07$\pm$0.02} & 0.86$\pm$0.10 & 0.80$\pm$0.08& 1.00$\pm$0.18& 1.43$\pm$0.33\\
\textbf{10} & $n$ & 2$n$ & 5000 & 2 & Gauss & \textbf{0.10$\pm$0.02} & 1.04$\pm$0.13 & 0.94$\pm$0.11 & 1.80$\pm$0.25& 3.28$\pm$0.26\\
\textbf{20} & $n$ & 2$n$ & 5000 & 2 & Gauss  & \textbf{0.19$\pm$0.03} & 1.09$\pm$0.11 & 0.94$\pm$0.10& 2.79$\pm$0.63& 6.31$\pm$0.16\\
\hline
10 & \textbf{1}$n$ & 2$n$ & 5000 & 2 & Gauss  & \textbf{0.10$\pm$0.02} & 1.04$\pm$0.13  & 0.94$\pm$0.11& 1.80$\pm$0.25& 3.28$\pm$0.26\\
10 & \textbf{2}$n$ & 2$n$ & 5000 & 2 & Gauss  & \textbf{0.11$\pm$0.01} & 1.04$\pm$0.13 & 0.94$\pm$0.14& 1.77$\pm$0.51&  3.36$\pm$0.40\\
10 & \textbf{4}$n$ & 2$n$ & 5000 & 2 & Gauss & \textbf{0.12$\pm$0.04} & 1.04$\pm$0.13 & 0.94$\pm$0.12& 1.69$\pm$0.30& 3.41$\pm$0.31\\
10 & \textbf{10}$n$ & 2$n$ & 5000 & 2 & Gauss & \textbf{0.10$\pm$0.01} & 1.04$\pm$0.12 & 0.94$\pm$0.12& 1.56$\pm$0.20&  3.25$\pm$0.12\\
\hline
10 & $n$ & \textbf{1}$n$ & 5000 & 2 & Gauss  & \textbf{0.16$\pm$0.02} & 1.03$\pm$0.15 & 0.82$\pm$0.03 & 1.59$\pm$0.22&  3.39$\pm$0.36\\
10 & $n$ & \textbf{2}$n$ & 5000 & 2 & Gauss & \textbf{0.10$\pm$0.02} & 1.04$\pm$0.13  & 0.93$\pm$0.16& 1.80$\pm$0.25& 3.28$\pm$0.26\\
10 & $n$ & \textbf{3}$n$ & 5000 & 2 & Gauss & \textbf{0.08$\pm$0.01} & 0.93$\pm$0.10 & 0.83$\pm$0.10& 1.48$\pm$0.30& 3.33$\pm$0.17\\
\hline
10 & $n$ & 2$n$ & \textbf{200} & 2 & Gauss  & \textbf{0.77$\pm$0.20} & 1.04$\pm$0.13 & 0.96$\pm$0.12& 1.97$\pm$0.14& 3.28$\pm$0.27\\
10 & $n$ & 2$n$ & \textbf{1000} & 2 & Gauss & \textbf{0.33$\pm$0.12} & 1.04$\pm$0.13  & 0.94$\pm$0.12& 1.71$\pm$0.16&  3.28$\pm$0.26\\
10 & $n$ & 2$n$ & \textbf{5000} & 2 & Gauss & \textbf{0.10$\pm$0.02} & 1.04$\pm$0.13 & 0.94$\pm$0.11& 1.80$\pm$0.25&  3.28$\pm$0.26\\
10 & $n$ & 2$n$ & \textbf{10000} & 2 & Gauss & \textbf{0.08$\pm$0.02} & 1.04$\pm$0.13 & 0.94$\pm$0.12& 1.80$\pm$0.29&  3.28$\pm$0.26\\
\hline
10 & $n$ & 2$n$ & 5000 & \textbf{0}  & Gauss  & 0.23$\pm$0.09 & 0.04$\pm$0.01 & \textbf{0.02$\pm$0.00} & 0.09$\pm$0.07&  3.29$\pm$0.26\\
10 & $n$ & 2$n$ & 5000 & \textbf{1}  & Gauss  & \textbf{0.12$\pm$0.03} & 0.59$\pm$0.06 & 0.51$\pm$0.05& 1.07$\pm$0.18&  3.29$\pm$0.26\\
10 & $n$ & 2$n$ & 5000 & \textbf{2}  & Gauss  & \textbf{0.10$\pm$0.02} & 1.04$\pm$0.13 & 0.94$\pm$0.11& 1.80$\pm$0.25& 3.28$\pm$0.26\\
10 & $n$ & 2$n$ & 5000 & \textbf{3}  & Gauss  & \textbf{0.13$\pm$0.04} & 1.34$\pm$0.26 & 1.12$\pm$0.09& 1.91$\pm$0.12& 3.29$\pm$0.26\\
\hline
10 & $n$ & 2$n$ & 5000 & 2 & \textbf{Gauss}   & \textbf{0.10$\pm$0.02} & 1.04$\pm$0.13 & 0.94$\pm$0.11& 1.80$\pm$0.25&  3.28$\pm$0.26\\
10 & $n$ & 2$n$ & 5000 & 2 & \textbf{Exp}     & \textbf{0.18$\pm$0.10} & 1.07$\pm$0.13 & 0.94$\pm$0.12& 1.82$\pm$0.23&  3.29$\pm$0.26\\
10 & $n$ & 2$n$ & 5000 & 2 & \textbf{Gumbel} & \textbf{0.15$\pm$0.08} & 1.08$\pm$0.13 & 0.95$\pm$0.12& 1.68$\pm$0.26&  3.29$\pm$0.26\\
10 & $n$ & 2$n$ & 5000 & 2 & \textbf{Nonlinear} & \textbf{0.10$\pm$0.02} & 0.92$\pm$0.32 & 0.37$\pm$0.06& 1.40$\pm$0.09&  3.28$\pm$0.26\\

\hline
\end{tabular}
\end{sc}
\end{small}
\end{center}
\end{table}

\FloatBarrier

In addition, we also conduct experiments with signed scaling $D^\tau$. As shown in Fig.~\ref{fig:combined2}(a), the low MCC but high d-MCC reflects that there is no global identifiability achieved, but inside each domain, the recovery of latent is almost perfect. Fig.~\ref{fig:combined2}(b) shows that the Amari distance decreases steadily as $T$ increases, showing that the estimated $\hat{A}$ becomes more accurate with more domains.

\begin{figure}[t]
\centering

\begin{subfigure}[t]{0.4\linewidth}
\vspace{0pt} 
\centering
\includegraphics[width=\linewidth]{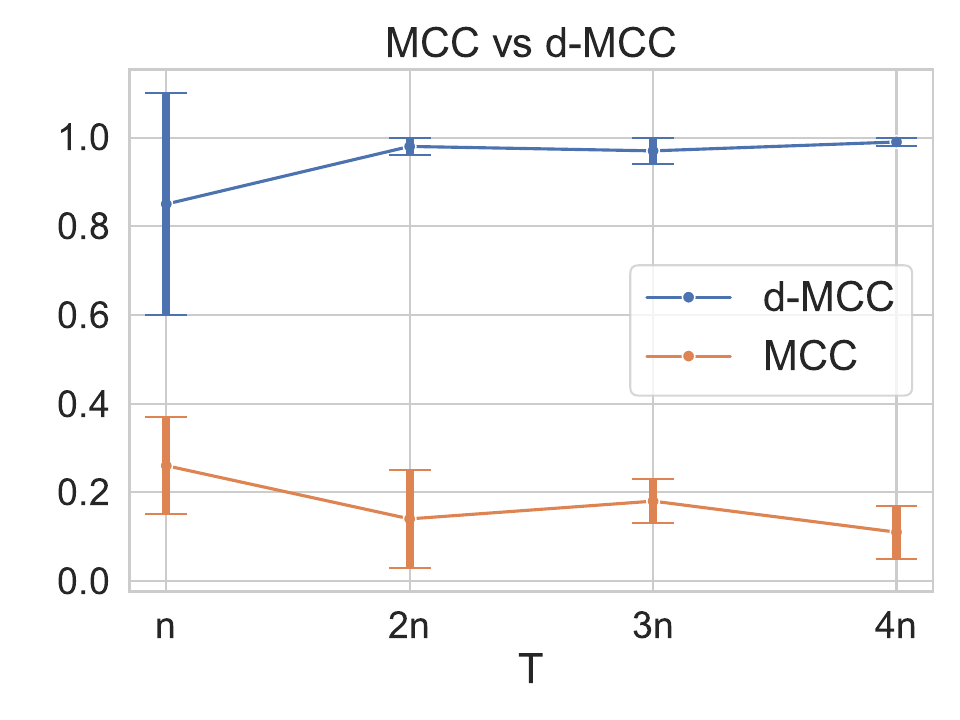}
\textbf{(a)}\par\medskip
\label{fig:mcc_mccg}
\end{subfigure}
\hfill
\begin{subfigure}[t]{0.4\linewidth}
\vspace{20pt} 
\centering
\small
\setlength{\tabcolsep}{6pt}
\renewcommand{\arraystretch}{1.3}
\begin{tabular}{c c}
\\
\\
\toprule
$T$ & Amari $\downarrow$ \\
\midrule
$n$   & $0.61 \pm 0.25$ \\
$2n$  & $0.40 \pm 0.16$ \\
$3n$  & $0.38 \pm 0.16$ \\
$4n$  & $0.28 \pm 0.09$ \\
\bottomrule
\\
\\
& \textbf{(b)}
\end{tabular}

\label{tab:amari_multi_sign}
\end{subfigure}

\caption{
\textbf{Effect of increasing the number of domains $T$.}
(\textbf{a}) Mean $\pm$ standard deviation of MCC and d-MCC over
$T \in \{n, 2n, 3n, 4n\}$ across 5 random seeds.
(\textbf{b}) Corresponding Amari distance, showing improved common mixing matrix recovery as more domains are observed.
}
\label{fig:combined2}
\end{figure}

\FloatBarrier

\onecolumn

\section{Discussion of Assumptions}
\label{app: discuss assump}

In this section, we assess the applicability and robustness of our assumptions and discuss to what extent they hold or may fail in practical scenarios.

\subsection{Noiseless Mixing Process}
\label{app: vio noiseless}
In general, our identifiability theorem can be extended to noisy settings when the noise variance is known. Here, we empirically test whether our model can still identify the latent, even when the noise variance is unknown. We assume the variance of the noise as one more set of parameters to train. The ground truth standard deviation (std) of noise $\sigma$ changes from 0 (no noise) to 1 (heavy noise). In \cref{tab: noiseless}, we report the estimated std of noise and Amari distance between $\hat{A}$ and ground truth $A$, which shows that empirically our method can recover both the noise variance $\sigma^2$ and the mixing matrix $A$. Setup: $n=d=10, T=2n, N=5000, k=2$.

\begin{table}[h]
\centering
\caption{Average Amari destance and $\hat{\sigma}$ over 5 random seeds}
\begin{tabular}{lcccc}
\hline
$\sigma$ & 0 & 0.1 & 0.5 & 1 \\
\hline
$\hat{\sigma}$ & 0.01 & 0.11 & 0.51 & 1.01 \\
\hline
$Amari$ & 0.10 & 0.10 & 0.11 & 0.12 \\
\hline
\end{tabular}
\label{tab: noiseless}
\end{table}

\subsection{Linear Mixing Function}
\label{app: vio linear}
We provide some empirical evidence on violations of linear mixing function. We test our method for the nonlinear mixing functions to show the necessity of this assumption. We use an MLP with Leaky-ReLU to model the nonlinear mixing function, where $m$ denotes the number of Leaky-ReLU layers. As expected, the results in \cref{tab: linear} degrade with more non-linearities, but it is still reasonable with one non-linearity. Setup: $n=d=5, T=2n, N=5000, k=2$.

\begin{table}[h]
\centering
\caption{Average $MCC$ over 5 random seeds}
\begin{tabular}{lcccc}
\hline
$m$ & 0 & 1 & 2 & 3 \\
\hline
$MCC$ & 1.00 & 0.93 & 0.81 & 0.73 \\
\hline
\end{tabular}
\label{tab: linear}
\end{table}

\subsection{Sensitivity to Cross-Domain Variations}
\label{app: vio vari}
Our theoretical analysis assumes that the mixing matrix and the latent correlation matrix are shared across domains, while domain differences come from component-wise scalings. If the mixing matrix or the latent correlation matrix also varies across domains, this becomes a model misspecification. In such cases, we expect MuDo-CoM to be reasonably robust to moderate deviations, but performance should gradually degrade as the deviation increases.

To assess this empirically, we conduct experiments where 
\begin{itemize}
    \item [i.] the mixing matrix is perturbed across domains as $A^{\tau}=A+\delta_A \Delta^{\tau}_A$ and 
    \item [ii.] the latent correlation matrix varies mildly across domains as $Corr^{\tau}=Corr+\delta_{Corr} \Delta^{\tau}_{Corr}$.  
\end{itemize}
The perturbation level is controlled by $\delta_A$ and $\delta_{Corr}$, and $\Delta^{\tau}_A$ and $\Delta^{\tau}_{Corr}$  are generated from standard Gauss. As shown in \cref{tab: vari domains}, the results show that the method can deal with mild misspecifications, but not with larger ones. A full theoretical analysis of identifiability under such misspecification is beyond the scope of the current paper. Setup: $n=d=10, T=2n, N=5000, k=2$.

\begin{table}[h]
\centering
\caption{Average $MCC$ over 5 random seeds}
\begin{tabular}{lcccc}
\hline
$\delta_A$ & 0 & 0.05 & 0.1 & 0.2 \\
$MCC$ & 1.00 & 0.95 & 0.85 & 0.68 \\
\hline
$\delta_{Corr}$ & 0 & 0.05 & 0.1 & 0.2 \\
$MCC$ & 1.00 & 0.98 & 0.90 & 0.52 \\
\hline
\end{tabular}
\label{tab: vari domains}
\end{table}

\subsection{\cref{ass: sufficient variability}: Sufficient Domain Variability}
\label{app: heu suff}
In this section, we provide two possible heuristics to test sufficient domain variability empirically: 
\begin{itemize}
    \item Compute the set of domain-wise covariance matrices of observations: If the covariance matrices are nearly identical or linearly dependent, this indicates insufficient domain variability.

    \item Running random seeds to see the consistency: Similar to what we did in fMRI data, we can run the algorithm multiple times with different random seeds and compare the recovered components across runs. When domains are sufficiently diverse, the estimated unmixing is stable and consistent (up to standard ambiguities such as sign and permutation). In contrast, insufficient variability typically leads to instability or non-unique solutions. This can be quantified, for example, by computing the mean correlation coefficient (MCC) between runs; low MCC indicates insufficient domain diversity.

\end{itemize}

\section{Discussion}
\label{app: discuss}

\paragraph{The gap between theory and practice.}
Our theoretical analysis relies on several simplifying assumptions that create a gap between theory and practical applications. First, we assume a linear mixing model, whereas real-world data may have nonlinear entanglement from latent to observations. Second, we consider a noiseless mixing process, i.e., there is no randomness encoded in the observations beyond the latent components, whereas practical measurements typically contain observation noise. Third, the theoretical guarantees \cref{thm: covmatch} are derived in the asymptotic sense with an infinite number of samples and do not directly characterize sample efficiency, as well as assuming obtaining the global optimal, and do not consider optimization in our study scope. Despite these idealizations, our empirical results, especially on real datasets, indicate that the proposed method is robust to moderate deviations from these assumptions, suggesting that the theory captures the core structure needed for recovery even in realistic settings.

\paragraph{Sufficient number of domains.}
Our identifiability results require a sufficient number of domains with diverse scaling patterns to ensure the uniqueness of the estimation (\cref{thm: identi}, \cref{thm: same sign D}, \cref{thm: covmatch}) . This requirement comes from the need to identify the shared mixing structure $A$ using second-order statistics alone. While the theoretical lower bound on the number of domains can be large in the worst case, it should be interpreted as a \emph{sufficient condition} rather than a tight practical requirement. In experiments (\cref{sec: experiments}), we observe successful recovery with substantially fewer domains, where (near-)perfect recovery can be achieved with $T=n$ only. This is likely due to additional structure in the dataset and the continuous nature of the scaling parameters. We believe incorporating a specific and realistic structure into the proposed framework could be an interesting direction to explore.

\paragraph{How to learn the sign of scaling?}
Our analysis (\cref{thm: same sign D}.(i)) shows that domain-specific scalings $D^\tau$ are identifiable only up to sign, i.e., there is a domain-dependent sign indeterminacy in the estimated latent variables. This is because second-order statistics are invariant to sign flips. In most ICA literature, resolving the sign of the scaling would require additional assumptions or information, such as constraints on the sign of domain scalings as we already explored in \cref{thm: same sign D}.(ii). Some alternative potential options include higher-order moments or weak supervision. Exploring principled ways to eliminate the sign interminancy remains an interesting direction for future work. \new{Beyond thoeratical identification, in practice, this ambiguity can be resolved with a simple post-processing step by aligning the sign of the sample mean of latent dimensions.}

\paragraph{Identifying the latent graphical structure from statistical dependency}
By explicitly allowing statistical dependence among latent variables, our framework enables recovery not only of latent representations but also of their dependency structure (correlation matrix). Our model can therefore be connected with the work in the causality literature, where the estimated latent covariance matrix can be interpreted as encoding a latent graphical structure. While our work focuses on identifiability and recovery of latent variables, extending the framework to explicitly learn the graphs and thereby identify the causal relationships among latent components is a useful direction for future research.

\paragraph{Moment matching v.s. likelihood-based methods.}
Our approach is based on moment matching, specifically second-order statistics, in contrast to likelihood-based methods that require explicit modeling of latent distributions. The key advantage of this choice is that estimation does not require assuming a specific form of distribution, and therefore obtain a simple, scalable optimization objective. (As an side benefit, covariance matching is quite robust against the model producing data that has deficient rank, as is the case here.)  However, likelihood-based approaches can be more statistically efficient when the model is correctly specified, as they exploit all statistical information rather than just second-order statistics. In terms of sample efficiency, as shown in \cref{tab: numerical}, our method performs worse with relatively small sample sizes. This is expected, since our method relies solely on covariance estimates, which can be noisy for small datasets.